\documentclass{article} % For LaTeX2e
\usepackage[final]{colm/colm2026_conference}

\usepackage{microtype}
\usepackage{hyperref}
\usepackage{url}
\usepackage{booktabs}

\usepackage{lineno}

\definecolor{darkblue}{rgb}{0, 0, 0.5}
\hypersetup{colorlinks=true, citecolor=darkblue, linkcolor=darkblue, urlcolor=darkblue}

\usepackage{graphicx}
\usepackage{subcaption}
\usepackage[dvipsnames]{xcolor}

\usepackage{amsmath}
\usepackage{physics}
\usepackage{wrapfig}
\usepackage{amssymb}
\usepackage{mathtools}
\usepackage{amsthm}
\usepackage{tikz}
\usetikzlibrary{positioning,arrows.meta,calc}

\definecolor{bauhausblue}{HTML}{0061AC}
\definecolor{bauhausred}{HTML}{DF232C}
\definecolor{bauhausyellow}{HTML}{F1B50E}

\usepackage{listings}
\usepackage[textsize=tiny,textwidth=3cm]{todonotes}

\usepackage[capitalize,noabbrev]{cleveref}

\theoremstyle{plain}
\newtheorem{theorem}{Theorem}[section]

\theoremstyle{definition}

\newtheorem{principle}[theorem]{Principle}
\theoremstyle{remark}

\title{Superposition Without Interference? \\ Towards Isolated Interventions via \\ Almost Orthogonal Features in Language Models}

\author{Moritz Miller$^{124}$\thanks{ The authors contributed equally to this work. Author order was determined by a 60–40 coin flip.} , Florent Draye$^{13\star}$ \& Bernhard Schölkopf$^{1234}$ \\
$^1$ Max Planck Institute for Intelligent Systems, Tübingen, Germany \\
$^2$ ETH Zurich, Switzerland \\
$^3$ ELLIS Institute Tübingen, Germany \\
$^4$ Max Planck ETH Center for Learning Systems \\
\texttt{moritz.miller@tuebingen.mpg.de} % \\
}

\newcommand{\TopK}{\mathrm{TopK}}

\newcommand{\K}{\mathrm{K}}

\newcommand{\tril}{\mathrm{tril}}

\newcommand{\Atoms}{\{\vf_j\}_{j \in [d]}}
\newcommand{\AtomsBracket}{\left[ \vf_1,...,\vf_d \right]}
\newcommand{\Frob}{\text{F}}
\newcommand{\GemmaTwoB}{Gemma~2~2B }
\newcommand{\LlamaThree}{Llama~3.1~8B-Instruct }
\newcommand{\Llama}{Llama~3.2~1B }
\newcommand{\normvec}[1]{#1}
\newcommand{\gsmk}{\texttt{GSM8K}}
\newcommand{\mmQA}{\texttt{MetaMathQA}}

\newcommand{\Rmodel}{\mathbb{R}^m}
\newcommand{\Rdict}{\mathbb{R}^d}
\newcommand{\Rdm}{\mathbb{R}^{d \times m}}
\newcommand{\Rmd}{\mathbb{R}^{m \times d}}

\newcommand{\Rmm}{\mathbb{R}^{m \times m}}

\newcommand{\vxhat}{\hat{\vx}}

\newcommand{\vxtilde}{\Tilde{\vx}}

\newcommand{\biasD}{\vb_\mD}
\newcommand{\biasE}{\vb_\mE}
\newcommand{\transpose}{\top}
\newcommand{\zeros}{\mathbf{0}}

\newcommand{\doop}{\mathrm{do}}

\newcommand{\NoiseDist}{\mathbb{P}_\mN}
\newcommand{\IntervMech}{\widetilde{f_l}(\widetilde{\Pa}_l, \widetilde{N_l})}
\newcommand{\SCMDist}{\mathbb{P}_\mX}
\newcommand{\SCMMarginals}[1]{P({X_{#1}}|\Pa_{#1})}
\newcommand{\IntervMarginals}[1]{\widetilde{P}({X_{#1}}|\Pa_{#1})}

\newcommand{\PZ}{\mathbb{P}_Z}

\newcommand{\aqua}{\textbf{\textcolor{bauhausblue}{aqua}}}
\newcommand{\Aquaman}{\textbf{\textcolor{bauhausblue}{Aquaman}}}

\newcommand{\Mike}{\textbf{\textcolor{bauhausyellow}{Mike}}}

\newcommand{\high}{\text{\textcolor{bauhausred}{high}}}
\newcommand{\medium}{\text{\textcolor{bauhausyellow}{medium}}}
\newcommand{\low}{\text{\textcolor{bauhausblue}{low}}}

\newcommand{\animal}{\texttt{animal}}

\newcommand{\location}{\texttt{location}}

\newcommand{\ROUGE}{\textsc{ROUGE-L recall}}
\newcommand{\LCS}{\textsc{LCS}}
\newcommand{\JSD}{D_{\mathrm{JSD}}}
\newcommand{\mymacro}[1]{{#1}}

\newcommand{\veta}{{\mymacro{\boldsymbol{\eta}}}}
\newcommand{\vgamma}{{\mymacro{\boldsymbol{\gamma}}}}

\newcommand{\Pa}{\mathrm{Pa}}

\newcommand\smalldots{\hbox to 1em{.\hss.\hss.}}

\newcommand{\loss}{{\mymacro{ \ell}}}

\newcommand{\negterm}[1]{{\mymacro{ {\raise.17ex\hbox{$\scriptstyle\sim$}} #1}}}

\newcommand{\ignore}[1]{}
\newcommand{\expandLater}[1]{}

\def\1{\mathbf{1}}

\def\vb{{{\mymacro{ \mathbf{b}}}}}

\def\ve{{{\mymacro{ \mathbf{e}}}}}
\def\vf{{{\mymacro{ \mathbf{f}}}}}

\def\vs{{{\mymacro{ \mathbf{s}}}}}

\def\vu{{{\mymacro{ \mathbf{u}}}}}
\def\vv{{{\mymacro{ \mathbf{v}}}}}

\def\vx{{{\mymacro{ \mathbf{x}}}}}
\def\vy{{{\mymacro{ \mathbf{y}}}}}
\def\vz{{{\mymacro{ \mathbf{z}}}}}

\def\mA{{{\mymacro{ \mathbf{A}}}}}

\def\mD{{{\mymacro{ \mathbf{D}}}}}
\def\mE{{{\mymacro{ \mathbf{E}}}}}

\def\mG{{{\mymacro{ \mathbf{G}}}}}

\def\mI{{{\mymacro{ \mathbf{I}}}}}

\def\mN{{{\mymacro{ \mathbf{N}}}}}

\def\mX{{{\mymacro{ \mathbf{X}}}}}

\newcommand{\N}{{\mymacro{ \mathbb{N}}}}

\newcommand{\KL}{{\mymacro{ D_{\mathrm{KL}}}}}

\DeclareMathSymbol{\mlq}{\mathord}{operators}{``} %math left quote
\DeclareMathSymbol{\mrq}{\mathord}{operators}{`'} %math right quote

\begin{document}

\ifcolmsubmission
\linenumbers
\fi

\maketitle

\begin{abstract}
A central premise in mechanistic interpretability is that meaningful concepts in language models are represented by linear features in activation space. For such features to support reliable interventions, manipulating one feature should not substantially alter the effects of others. In practice, however, feature entanglement leads to interference such that localized interventions can have unintended downstream effects. Motivated by the \textit{Independent Causal Mechanisms} principle, we propose to constrain internal features to be almost orthogonal. We argue that this promotes modular representations amenable to causal intervention. We formalize this problem by characterizing the gap between an idealized isolated intervention and its realized effect on model outputs in terms of feature interference. We upper-bound the propagation of feature interference in terms of the self-coherence of the feature dictionary, and relate this discrepancy to an explicit orthogonality regularization on the dictionary itself. Empirically, we show that this regularization enables more isolated interventions on mathematical reasoning concepts while preserving model performance. Our code is available under \texttt{https://github.com/mrtzmllr/sae-icm}.
\end{abstract}
% In addition, we upper-bound the feature interference in terms of the dictionary’s self-coherence. We then show that restricting the Jacobian is equivalent to a direct orthogonality regularization on the dictionary. 
\begin{figure*}[h]
     \centering
     \begin{subfigure}[b]{\linewidth}
         \centering
         \begin{lstlisting}[frame=none]
|b|Question:|
|y|Mike| writes a 3-page letter to 2 different friends twice a week. How
many pages does he write a year?
\end{lstlisting}
     \end{subfigure}
    % \hfill
    \begin{subfigure}[b]{0.48\linewidth}
        \centering
         \begin{lstlisting}[frame=none]
|b|Standard Generation:|
|y|Mike| writes a 3-page letter
to 2 different friends, so 
he writes 3 x 2 = |r|6| 
pages per week.

He writes this letter twice
a week, so he writes 6 x 2 = |r|12| 
pages per week.

There are 52 weeks in a year,
so |y|Mike| writes 12 x 52 = 624
pages in a year.

#### 624
The answer is: |r|624|
\end{lstlisting}
    \end{subfigure}
    \hfill
    \begin{subfigure}[b]{0.48\linewidth}
        \centering
         \begin{lstlisting}[frame=none]
|b|Post-Intervention Generation:|
If |u|Aquaman| writes a 3-page letter
to 2 different friends, he writes 
a total of 3 x 2 = |r|6| 
pages per week.

Since he writes this letter twice 
a week, he writes a total
of 6 x 2 = |r|12| pages per week.

If there are 52 weeks in a year, 
then he writes a total of 
12 x 52 = 624 pages in a year.

#### 624
The answer is: |r|624|
\end{lstlisting}
    \end{subfigure}
\caption{\textbf{Intervention on SAE feature } After regularizing the SAE decoder under an orthogonality penalty, we perform an isolated intervention on the feature associated with $\Mike$ by exchanging it with the feature corresponding to the prefix $\aqua$. The model accordingly substitutes $\Mike$ for $\Aquaman$ at inference while maintaining its reasoning capabilities. \looseness=-1}
\label{fig:main}
\end{figure*}
\section{Introduction}

The \textit{Independent Causal Mechanisms} (ICM) principle posits that complex systems can be decomposed into autonomous modules that do not inform or influence each other~\citep{Janzing2010CausalCondition}. In the context of learned representations, this implies that intervening on one latent mechanism should not systematically alter the functional form of the remaining mechanisms. This perspective is especially appealing for mechanistic interpretability. If a feature corresponds to a meaningful concept, an intervention on that feature should induce a targeted change in model behavior without substantially altering the effects of other features. More broadly, this would move us closer to representations in which a conceptual intervention in the output corresponds to a localized intervention inside the model.

In practice, however, learned features in language models are often not isolated. Features that are strongly aligned in representation space can interfere with one another, so that modifying one feature in the residual stream also changes the influence of others~\citep{elhage2022superposition}. This makes the effect of steering ambiguous, blurs the causal contribution of the targeted feature, and limits intervenability within the model. Prior work on superposition further suggests that such feature alignment can induce interference patterns that are difficult to control and may even support adversarial perturbations~\citep{gorton2025adversarial}. In this paper, we formalize feature interference and study how reducing it improves the isolation of feature-level interventions.

Sparse autoencoders (SAEs) provide a natural setting in which to address this problem. They have become a central tool in mechanistic interpretability because they uncover large numbers of human-interpretable features in language model representations~\citep{cunningham2023SparseAF,paulo2024automatically}. In our setting, SAEs serve not only to discover such features, but also to shape the geometry of the feature space itself. Concretely, we fine-tune an SAE with an orthogonality penalty, freeze the resulting weights, and then adapt the language model around it using low-rank adaptation, while preserving overall performance~\citep{hu2021lora,wang2025resatransparentreasoningmodels,checn2025lowrankadapting}. This yields features that remain interpretable while exhibiting substantially less interference, allowing more localized interventions with limited spill-over onto other features. \autoref{fig:main} illustrates this setup: after replacing the feature corresponding to the concept $\Mike$ with one associated with usages of the prefix $\aqua$, the model still reasons correctly toward $\mathbf{\textcolor{bauhausred}{624}}$ while adapting the intervention coherently to context, replacing the male first name $\Mike$ with $\Aquaman$ during generation. Despite this local concept substitution, the downstream effect on other concepts remains limited.

Given this methodology, our contributions are:

\begin{itemize}
    \item We theoretically relate intervenability in the residual stream to the interference between features (Section~\ref{sec:intervenabilitytheory}). By a causal representation argument under a linear hypothesis, we upper-bound the expected discrepancy between true and model output in terms of the representation-space interference. Explicitly controlling the latter, we obtain bounds for the former.
    \item We fine-tune a LM around a fine-tuned SAE whose decoder matrix is regularized to be almost orthogonal. We demonstrate that this fine-tuning pipeline maintains similar performance and interpretability scores on mathematical reasoning as a non-penalized SAE (Section~\ref{sec:exportho}).
    \item To the best of our knowledge, we are the first to perform \textit{isolated interventions} on specific concepts inside the SAE by explicitly limiting spill-over effects on other features (Section~\ref{sec:expinterventions}). We demonstrate that the increased orthogonality penalty improves intervenability in the model's representation space.
    % \item We show that this setup obtains similar interpretability scores as non-penalized SAEs (Section~\ref{sec:expinterp}). More importantly, Section~\ref{sec:expembeddings} reflects that the embedded feature explanations are significantly less similar to each other under our orthogonality penalty.
    % \item We theoretically relate intervenability in the residual stream to the interference between features. Leveraging finite frame theory, we argue that interference between features increases similarity between features (Section~\ref{sec:intervenabilitytheory}). \looseness=-1
\end{itemize}
\section{Related work}
\paragraph{Intervenability}
In applied causality, a set of interventions is commonly described as datasets from different environments~\citep{icp}. Under this notion, one main objective lies in identifying the causal mechanisms prevalent across environments~\citep{dofinetti}. Additionally, the unique disentanglement of hidden concepts appears in causal representation learning~\citep{bernhard2021towardcrl}. Here, identifiability concerns finding a unique model which can explain the observed data. While non-identifiability in causal representation learning represents a well-studied problem~\citep{HYVARINEN1999ica,locatello2019challenging}, additional information such as access to interventional data can yield identifiability results~\citep{buchholz2023learninglinear}. Thus, traditional causality studies interventions as distributional properties of the underlying data-generating process~\citep{Spirtes2000causation}. Recent work utilizes the SAE to infer information about the data~\citep{mencattini2026exploratorycausalinferencesaence}. In contrast, we study intervenability on the modeling level. This is similar to uncovering the causal mechanisms inside the model's representation space~\citep{iccr}. A general causal abstraction perspective on mechanistic interpretability is provided by~\citet{Geiger2023CausalAA}. \looseness=-1 % Combining those two facets of intervenability, recent work utilizes the SAE inside the model to infer information about the data~\citep{mencattini2026exploratorycausalinferencesaence}.

%\paragraph{Identifiability}\moritz{drop this section, move CRL to interventions}
%Identifiability in sparse dictionary learning is centered around the uniqueness bound in~\autoref{thm:self-coherence} and low self-coherence in general. Beyond conventional dictionary learning, self-coherence~\citep{nejati2016coherence} represents a relevant guiding principle in signal processing~\citep{sigg2012learning} and computer vision~\citep{hawe2013separable}. 
% In addition to the aforementioned literature, the unique disentanglement of hidden concepts reappears in causal representation learning~\citep{bernhard2021towardcrl}. Here, identifiability concerns finding a unique model which can explain the observed data. While non-identifiability in causal representation learning represents a well-studied problem~\citep{HYVARINEN1999ica,locatello2019challenging}, additional information such as access to interventional data can yield identifiability results~\citep{buchholz2023learninglinear}.

\paragraph{Interpretability}
The linear superposition hypothesis~\citep{elhage2022superposition} posits that LMs represent features as directions in their activation space. In this way, the model can represent more features than it has dimensions. To disentangle the polysemantic residual stream into human-interpretable features, SAEs reconstruct the activations as sparse linear combinations of features~\citep{cunningham2023SparseAF}. Further, the linear representation hypothesis~\citep{park2024lrh} hinges on the observation that semantically unrelated features are represented as almost orthogonal vectors in some representation space~\citep{jiang2024ontheorigins}. Moreover, recent empirical evidence supports the emergence and test-time use of such linear structure in LMs~\citep{chen2026emergence}. In SAEs specifically, we empirically observe high self-coherence between features such that finding a canonical dictionary is impossible~\citep{marks2024enhancing,leask2025sparse}. % Beyond language modeling, SAEs are employed in genomics~\citep{Pedrocchi2025sparse,Maiwald2025decodeglm} and computer vision~\citep{pach2025sparseautoencoderslearnmonosemantic}. 
In the context of sparsity, weight-sparse models~\citep{gao2025weight} and sparse attention fine-tuning~\citep{draye2025sparse} share our objective of constructing inherently interpretable architectures. \looseness=-1

\section{Background}

\paragraph{Notation}
We employ an abbreviated set notation and write $[n] := \{1,...,n\}$ for integer $n$.

\subsection{Causality}
This section follows~\citet{elements}. For additional details, we refer the reader there. The structural causal model (SCM) consists of a collection of $m$ structural assignments
\begin{align*}
    X_i := f_i(\Pa_i, N_i), i \in [m]
\end{align*}
for $\Pa_i \subseteq \{X_1, ..., X_m\} \backslash \{X_i\}$ the parents of $X_i$, and the distribution $\NoiseDist = \mathbb{P}_{N_1,...,N_m}$ over the jointly independent noise variables~\citep{pearl2009}. It entails a joint distribution $\SCMDist$ over $\{X_1, ...,X_m\}$ as well as the marginals $\SCMMarginals{i}$ for $i \in [m]$. An intervention in the SCM is the replacement of at least one structural assignment, $l \in [m]$, % to obtain the new SCM $\SCMInterv$, 
\begin{align*}
    X_l := \IntervMech
\end{align*}

inducing $\IntervMarginals{l}$. We denote the intervention by $\doop(X_l := \IntervMech)$.
% \moritz{maybe add additivity} Graphically, we represent an SCM as a directed acyclic graph with $\{X_1, ..., X_m\}$ as nodes and directed edges from each causal parent of $i$ to $i$. 
For any $i \in [m]$, $\SCMMarginals{i}$ describes the causal mechanism generating $X_i$. With this, we can state the ICM principle~\citep{Janzing2010CausalCondition,causaldefinetti}. \looseness=-1
\begin{principle}[Independent Causal Mechanisms]
    Causal mechanisms are independent of each other in the sense that a change in one mechanism $\SCMMarginals{l}$ does not inform or influence any other mechanism $\SCMMarginals{i}$, for $i \neq l$. \looseness=-1
\label{def:icm}
\end{principle}

\subsection{Sparse autoencoders}
\label{sec:sae}
SAEs are trained on reconstructing the activations in the residual stream under a sparsity penalty. The output is then a sparse linear combination of vectors. Let $\vx \in \Rmodel$ denote the Transformer state prior to the SAE. A $\TopK$ SAE~\citep{makhzani2013ksae,gao2025topk} then outputs the reconstruction $\vxhat \in \Rmodel$,
\begin{align*}
    \vz &= \TopK \left( \mE \vx + \biasE \right) \\
    \vxhat &= \mD\vz + \biasD
\end{align*}
for $\vz \in \Rdict$ the coefficients for the features and $\mE \in \Rdm, \mD \in \Rmd$ encoder and decoder matrices with biases $\biasE \in \Rdict, \biasD \in \Rmodel$, respectively. We call $\mD$ our feature dictionary with $d \gg m$ features.

We train our autoencoder on the normalized reconstruction loss with penalty on $\mD^\transpose \mD$,
\begin{align}
    \loss ( \vx ) = \frac{\|\vxhat - \vx\|_2}{\|\vx\|_2} + \lambda \|\tril (\mD^\transpose \mD)\|_\Frob^2
\label{eqn:reconstruction}
\end{align}
where $\|\tril(\mD^\transpose \mD)\|_\Frob^2$ enforces almost orthogonality between features with $\lambda > 0$. By $\tril(\mD^\transpose \mD)$ we denote the off-diagonal lower-triangular elements of $\mD^\transpose \mD$. During the forward pass, $\TopK$ only activates the features associated with the $\K$ highest coefficients with all other activations zeroed out. \looseness=-1

\subsection{Sparse dictionary learning}
Similar to the features in the SAE decoder, the atoms compose the dictionary in sparse dictionary learning. If the number of atoms in the dictionary exceeds the number of dimensions per atom, we call the dictionary \textit{overcomplete}. Thus, for $\mD \in \Rmd$ with $d > m$, the atoms form an overcomplete spanning set. In the overcomplete setting, self-coherence determines how sparse the representations must be for identifiability to hold~\citep{donoho2005stable}. Relevant to our later work, we define the mean self-coherence here as

\begin{align}
    % \mu(\mD) = \max_{i \neq j} |\langle \vf_i, \vf_j \rangle|.
    \mu(\mD):=\frac{1}{d(d-1)}\sum_{j = 1}^n\sum_{k\neq j}\langle \vf_j,\vf_k\rangle^2.
\label{eqn:self-coherence}
\end{align}

Welch bounds provide lower bounds on the similarity between any two atoms~\citep{welch1974lowerbounds}. % With respect to~\eqref{eqn:self-coherence}, we can derive the lower bound for unit-norm atoms. 
Let $\Atoms$ unit-norm for $\vf_j \in \Rmodel$ denote the $d$ atoms composing the dictionary. For $\N_{>0}$ non-zero natural numbers and $s \in \N_{>0}$, we have
\begin{align}
    \frac{1}{d^2} \sum_{j = 1}^d \sum_{k = 1}^d |\langle \normvec{\vf_j}, \normvec{\vf_k} \rangle|^{2s} \geq \frac{1}{\binom{m+s-1}{s}}.
        % \frac{\sum_{i = 1}^d \sum_{j = 1}^d |\langle \vf_i, \vf_j \rangle|^{2s}}{\left(\sum_{i = 1}^d \|\vf_i\|\right)^{2s}} \geq \frac{1}{\binom{m+s-1}{s}}.
\label{eqn:generalizedwelch}
\end{align}
In Appendix~\ref{app:welch}, we discuss self-coherence for identifiability and add further Welch bounds. \looseness=-1

% ICML
% \subsection{Finite Frame Theory}
% For $\Hilbert$ Hilbert space and $\Frame$ a set of vectors with $\vf_j \in \Rmodel, j \in [d]$, we call the finite-dimensional set $\Frame$ a frame for $\Hilbert$ iff $\Span(\Frame) = \Hilbert$~\citep{Casazza2013finiteframe}. The analysis operator $\mT: \Hilbert \longrightarrow \ell_2^d$ is then defined as
% \begin{align}
%     \mT \vx := \left\{ \langle \vx, \vf_j \rangle \right\}_{j \in [d]}
% \label{eqn:analysisoperator}
% \end{align}
% for $\vx \in \Hilbert$. It follows that the adjoint operator $\mT^*: \ell_2^d \longrightarrow \Hilbert$ of $\mT$ is given for $\vz \in \Rdict$ by
% \begin{align}
%     \mT^* (\vz) = \sum_{j = 1}^d z_j \vf_j.
% \label{eqn:adjointoperator}
% \end{align}

% A frame is overcomplete if $d > m$.
\section{Aligned features reduce intervenability}
\label{sec:intervenabilitytheory}

We consider the causal graph
\[
\begin{array}{ccc}
Z & \longrightarrow & Y \\
\downarrow &  & \\
X & \longrightarrow & \widehat Y
\end{array}
\]
for \(Z\in\mathbb{R}^d\) the latent state, \(X\in\mathbb{R}^m\) the model representation, \(Y\) the true output, and \(\widehat Y\) the model output. We assume that the model representation is a sparse linear combination of features, i.e., \looseness=-1
\[
X=\mD Z.
\]

In our theoretical setup, we assume that \(Z\) is exogenous with distribution $\PZ$, and, following~\citet{park2024lrh}, we model the true and conditional output distributions as exponential-family distributions with natural parameters
\[
\veta := \mA Z,
\qquad
\widehat{\veta} := \mA \widehat Z,
\]
where \(\mD=\AtomsBracket\in\mathbb{R}^{m\times d}\), \(\mA\in\mathbb{R}^{q\times d}\), and \(\vgamma(\vy)\in\mathbb{R}^q\) is a fixed embedding of the output \(\vy\). The decoded latent representation is
\[
\widehat Z = \mD^\top X = \mD^\top \mD Z = \mG Z.
\]
Accordingly, the true and model output distributions are given by
\[
P\bigl(Y=\vy\mid Z\bigr)\propto \exp\!\bigl(\veta^\top \vgamma(\vy)\bigr),
\qquad
P\bigl(\widehat Y=\vy\mid Z\bigr)\propto \exp\!\bigl(\widehat{\veta}^\top \vgamma(\vy)\bigr).
\]
Thus, the true output depends on the latent state through \(\veta=\mA Z\), whereas the model output depends on the decoded representation through \(\widehat{\veta}=\mA\mG Z\).

Now fix \(j\in[d]\) and \(\delta\in\mathbb{R}\), and consider the intervention
\[
\doop(Z_j:=Z_j+\delta),
\]
which yields the intervened latent state
\[
Z^{(j,\delta)} := Z+\delta \ve_j.
\]
Let \(Y^{(j,\delta)}\) and \(\widehat Y^{(j,\delta)}\) denote the corresponding post-intervention output probability vectors.
We measure the discrepancy between the true and model post-intervention outputs by
\begin{align}
\mathbb{E}\bigl[\|Y^{(j,\delta)}-\widehat Y^{(j,\delta)}\|_2^2\bigr].
\label{eqn:discrepancy}
\end{align}
and upper-bound the expected discrepancy at the output level. We defer all proofs and details to Appendix~\ref{app:interference}.
\begin{theorem}[Upper bound on interference]
Let $\mD=\AtomsBracket \in \Rmd$ and assume that
\[
\|\vf_j\|_2=1,\qquad j \in [d].
\]
% Fix $j \in [d]$ and \(\delta\in\mathbb{R}\). Then
% \[
% \mathbb{E}\Bigl[\|Y^{(j,\delta)}-\widehat Y^{(j,\delta)}\|_2^2\Bigr]
% \le
% L^2\|\mA\|_{\mathrm{op}}^2\,
% \mathbb{E}\Bigl[\|(\mG-\mI)(Z+\delta \ve_j)\|_2^2\Bigr].
% \]
% Consequently,
% \[
% \mathbb{E}\Bigl[\|Y^{(j,\delta)}-\widehat Y^{(j,\delta)}\|_2^2\Bigr]
% \le
% 2L^2\|\mA\|_{\mathrm{op}}^2
% \Bigl(
% \mathbb{E}[\|(\mG-\mI)Z\|_2^2]
% +
% \delta^2\sum_{k\neq j}\langle \vf_k,\vf_j\rangle^2
% \Bigr).
% \]
% Averaging over \(j\) yields
Then, for $\mu(\mD)$ as defined in~\eqref{eqn:self-coherence},
\[
\frac1d\sum_{j=1}^d
\mathbb{E}\Bigl[\|Y^{(j,\delta)}-\widehat Y^{(j,\delta)}\|_2^2\Bigr]
\le
\frac{1}{2}\|\mA\|_{\mathrm{op}}^2\,
\mathbb{E}[\|(\mG-\mI)Z\|_2^2]
+
\frac{1}{2}\|\mA\|_{\mathrm{op}}^2\,\delta^2(d-1)\mu(\mD).
\]
% \[
% \mu(\mD):=\frac{1}{d(d-1)}\sum_{j\neq k}\langle \vf_j,\vf_k\rangle^2.
% \]
\label{thm:upper-bound}
\end{theorem}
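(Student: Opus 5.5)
The plan is a chain of three elementary inequalities: a Lipschitz bound for the map from natural parameters to output probabilities, a splitting of the intervened perturbation into a baseline part and an intervention part, and an averaging step over $j$ that produces $\mu(\mD)$.

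\emph{Step 1: softmax Lipschitz bound.} I read the post-intervention output probability vectors as softmaxes of the natural parameters. The intended constant $\tfrac12$ is consistent with taking the sufficient statistics $\vgamma(\vy)$ to be the standard basis of $\mathbb{R}^q$, so that $Y^{(j,\delta)}=\mathrm{softmax}(\mA Z^{(j,\delta)})$ and $\widehat Y^{(j,\delta)}=\mathrm{softmax}(\mA\mG Z^{(j,\delta)})$. For general $\vgamma$, a factor $\|\Gamma\|_{\mathrm{op}}^2$, with $\Gamma$ the matrix whose rows are the $\vgamma(\vy)^\top$, would have to be absorbed into $\|\mA\|_{\mathrm{op}}$. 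I will state this assumption explicitly.

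I then show that softmax is $\tfrac12$-Lipschitz in $\ell_2$. Its Jacobian at $\vu$ is $J=\mathrm{diag}(\vv)-\vv\vv^\top$ with $\vv=\mathrm{softmax}(\vu)$. This matrix is symmetric positive semidefinite. By Gershgorin, row $i$ gives
\[
\lambda\le v_i(1-v_i)+v_i\textstyle\sum_{k\neq i}v_k=2v_i(1-v_i)\le\tfrac12 ,
\]
so $\|J\|_{\mathrm{op}}\le\tfrac12$ everywhere. Applying the mean value inequality along the segment between the two parameter vectors yields
\[
\|Y^{(j,\delta)}-\widehat Y^{(j,\delta)}\|_2^2\le\tfrac14\|\mA(\mG-\mI)(Z+\delta\ve_j)\|_2^2\le\tfrac14\|\mA\|_{\mathrm{op}}^2\|(\mG-\mI)Z+\delta(\mG-\mI)\ve_j\|_2^2 .
\]

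\emph{Step 2: splitting.} Using $\|\vu+\vv\|^2\le 2\|\vu\|^2+2\|\vv\|^2$ and taking expectations over $Z\sim\PZ$ gives
\[
\mathbb{E}\bigl[\|Y^{(j,\delta)}-\widehat Y^{(j,\delta)}\|_2^2\bigr]\le\tfrac12\|\mA\|_{\mathrm{op}}^2\Bigl(\mathbb{E}[\|(\mG-\mI)Z\|_2^2]+\delta^2\|(\mG-\mI)\ve_j\|_2^2\Bigr).
\]
The $k$-th entry of $(\mG-\mI)\ve_j$ is $\langle\vf_k,\vf_j\rangle-\mathbb{1}[k=j]$. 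The unit-norm assumption kills the diagonal entry, so
\[
\|(\mG-\mI)\ve_j\|_2^2=\sum_{k\neq j}\langle\vf_k,\vf_j\rangle^2 .
\]

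\emph{Step 3: averaging over $j$.} The first term does not depend on $j$. For the second, averaging gives
\[
\frac1d\sum_j\sum_{k\neq j}\langle\vf_j,\vf_k\rangle^2=(d-1)\mu(\mD)
\]
directly from \eqref{eqn:self-coherence}, reading its upper summation limit as $d$. This is exactly the claimed bound.

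The only step I expect to need care is Step 1: pinning down the output parametrization so that the constant is $\tfrac12$, and proving the $\tfrac12$ operator-norm bound on the softmax Jacobian. Gershgorin handles the latter cleanly. Steps 2 and 3 are routine algebra.
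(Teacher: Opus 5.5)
Your proposal is correct and follows the paper's proof step for step: the mean-value-theorem passage from natural parameters to output probabilities, the split $\|\vu+\vv\|_2^2\le 2\|\vu\|_2^2+2\|\vv\|_2^2$, the identity $\|(\mG-\mI)\ve_j\|_2^2=\sum_{k\neq j}\langle\vf_k,\vf_j\rangle^2$ from unit-norm columns, and averaging over $j$. Beyond the paper, you prove the softmax Jacobian bound $L=\tfrac12$ via Gershgorin, which the paper only asserts, and you state explicitly that $\vgamma(\vy)$ must be the standard basis for the constant $\tfrac12$ to hold, which the paper leaves implicit in ``the standard softmax case.''
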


The theorem shows that the post-intervention mismatch is controlled by a baseline term,
\[
\mathbb E[\|(\mG-\mI)Z\|_2^2],
\]
and an intervention-specific term,
\[
\delta^2\sum_{k\neq j}\langle \vf_k,\vf_j\rangle^2.
\]
Thus, when the columns of \(\mD\) are almost orthogonal, interventions on a latent feature induce only limited distortion in the model output. This becomes explicit under additional assumptions on the latent distribution. Suppose that \(Z\) has uniformly random support of size \(\K\), and that conditioned on its support the nonzero coordinates are independent, mean zero, and have variance \(\sigma^2\). Then, the averaged bound of the theorem yields
% \[
% \frac1d\sum_{j=1}^d
% \mathbb E\Bigl[\|Y^{(j,\delta)}-\widehat Y^{(j,\delta)}\|_2^2\Bigr]
% \le
% 2L^2\|\mA\|_{\mathrm{op}}^2(d-1)\mu(\mD)\bigl(\K\sigma^2+\delta^2\bigr).
% \]
% In particular, 

\[
\frac1d\sum_{j=1}^d
\mathbb E\Bigl[\|Y^{(j,\delta)}-\widehat Y^{(j,\delta)}\|_2^2\Bigr]
\le
\frac12\|\mA\|_{\mathrm{op}}^2(d-1)\mu(\mD)\bigl(\K\sigma^2+\delta^2\bigr).
\]

\paragraph{Empirical investigation}
\begin{wrapfigure}{r}{0.5\linewidth}
    \centering
    \includegraphics[width=\linewidth]{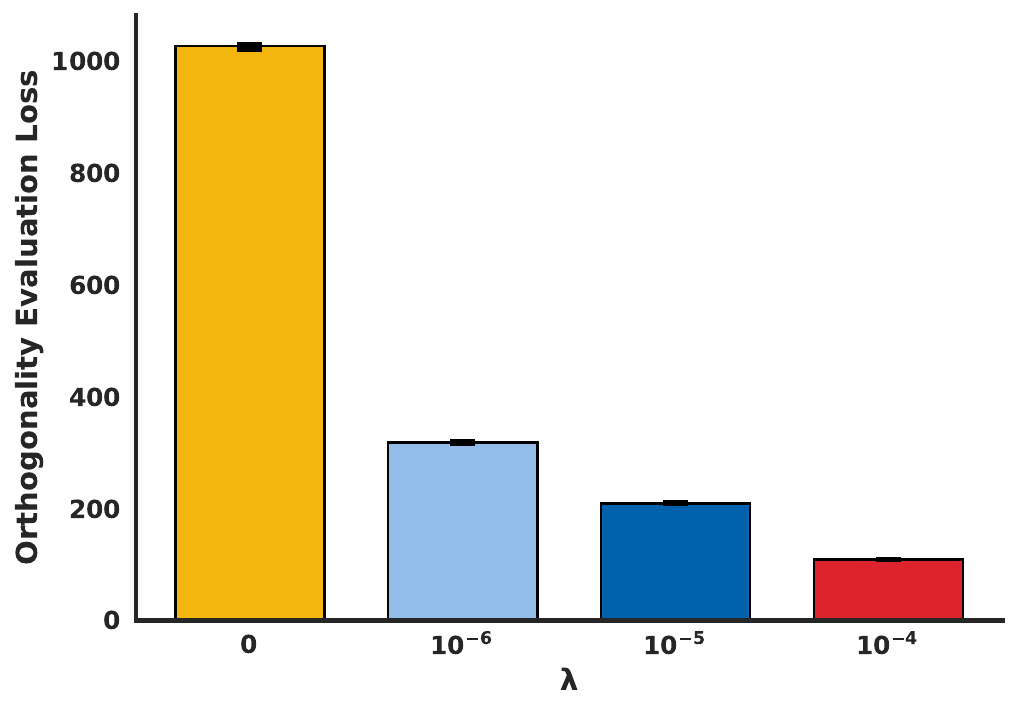}
    \caption{\textbf{Orthogonality evaluation loss } We plot the orthogonality loss $\|\tril(\mD^\transpose \mD)\|_\Frob^2$ for all values of $\lambda$. For each $\lambda$, we evaluate on a subset of $1'024$ active features in the decoder. Error bars represent confidence intervals obtained from running $100$ evaluations. \looseness=-1}
    \label{fig:orthogonality}
\end{wrapfigure}
In this regime, both the baseline term and the intervention-specific term are governed by the same coherence quantity \(\mu(\mD)\), making the role of decoder geometry fully explicit.
We design a short numerical example to demonstrate the effect of interference between features on the expected post-intervention mismatch. For $\mA$ random matrix with standard Gaussian entries, we consider the case where $q > d > m$. If we understand $q$ as the output dimension similar to the vocabulary size in language modeling, this follows the same hierarchy as in our SAE experiments. We compare an implementation with L2-normalized random Gaussian columns to settings with higher and lower self-coherence. Confirming our intuition, we observe that low self-coherence decreases the intervention mismatch. Details and results we defer to Appendix~\ref{app:interference}.
\section{Experiments}
\label{sec:experiments}
Recent work demonstrates that fixing an SAE inside the LM during fine-tuning achieves low cross-entropy~\citep{checn2025lowrankadapting}. We extend this two-step procedure~\citep{wang2025resatransparentreasoningmodels} by restricting the SAE decoder to be almost orthogonal and fine-tuning the LM around the SAE. We insert the SAE after Transformer layer $13$ of $26$. In doing so, we add to the previous section by providing evidence that orthogonality regularization at an intermediate layer also improves intervenability. \looseness=-1

\begin{enumerate}
    \item We optimize a pre-trained $\TopK$ SAE~\citep{gao2025topk} for low self-reconstruction under an orthogonality penalty~\eqref{eqn:reconstruction} on the decoder matrix.
    \item Fixing the SAE decoder, we then insert the SAE into a \GemmaTwoB Transformer and low-rank adapt~\citep{hu2021lora} the LM on cross-entropy.
\end{enumerate}

\label{sec:exportho}
In total, we fine-tune four SAEs with orthogonality penalty $\lambda \in \{0, 10^{-6}, 10^{-5}, 10^{-4}\}$. In particular, we demonstrate all our claims in comparison to the non-penalized case of $\lambda = 0$. Throughout, we choose $\K = 20$. Both steps in our pipeline involve fine-tuning the model for one epoch on $\mmQA$~\citep{yu2023metamath}. First, we fully fine-tune the SAE. Next, we low-rank adapt the attention matrices including the non-decoder SAE weights. Therefore, we restrict the decoder to contain almost orthogonal features, while allowing the rest of the architecture to adapt around this penalty. Adopting the SAEBench~\citep{karvonen2025saebenchcomprehensivebenchmarksparse} module of size $2^{16} = 65'536$, we input the SAE into the $26$-layer residual stream after layer $13$. Thus, all information flows through the SAE. To limit the computational overhead induced by the penalty in~\eqref{eqn:reconstruction}, we evaluate the decoder regularization at every step on a subset of $2^{10} = 1'024$ randomly drawn features. This makes fine-tuning feasible while updating decoder weights progressively. All details including hyperparameter choice and training duration are deferred to Appendix~\ref{app:experiments}.

\subsection{Almost orthogonality while keeping performance}

% The generalized Welch bound in~\eqref{eqn:generalizedwelch} provides a measure of comparing the average similarity between dictionary features across SAEs. Evaluating 
% \begin{align}
%     \frac{\sum_{i = 1}^d \sum_{j = 1}^d |\langle \vf_i, \vf_j \rangle|^{2s}}{\left(\sum_{i = 1}^d \|\vf_i\|\right)^{2s}}
% \label{eqn:generalizedwelchmetric}
% \end{align}
% for $s = 1$, we obtain a standardized comparison metric to study the effect of penalizing the SAE. 
\autoref{fig:orthogonality} displays the orthogonality loss $\|\tril(\mD^\transpose \mD)\|_\Frob^2$ for increasing orthogonality penalty. Error bars represent the basic bootstrap~\citep{efron1979basicbootstrap} confidence intervals at the $95\%$-level for $100$ sampled datasets. For computational efficiency, we compute the evaluation loss on $100$ draws of $2^{10} = 1'024$ randomly selected active features. That is, we only evaluate on features that activate at least once in the test set. First, we observe that no penalty yields the highest orthogonality loss. This confirms the intuition that features do not empirically tend toward orthogonality if not explicitly regularized to do so. Further, we observe monotonically decreasing loss as $\lambda$ increases. Inserting the orthogonality penalty, therefore, has the desired effect. \looseness=-1

\begin{wrapfigure}{r}{0.5\linewidth}
    \centering
    \includegraphics[width=\linewidth]{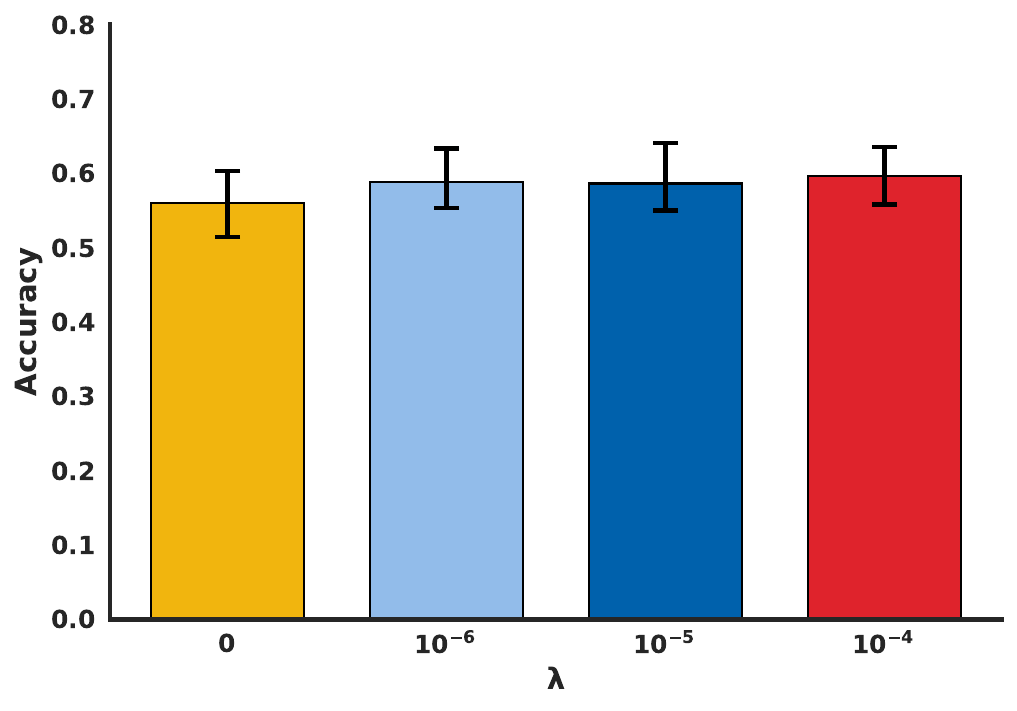}
    \caption{\textbf{Evaluation on \gsmk } We evaluate on the $\gsmk$ test set. Error bars represent the basic bootstrap confidence intervals~\citep{efron1979basicbootstrap} on $100$ randomly drawn datasets.}
    \label{fig:mathevaluation}
\end{wrapfigure}
Next, we evaluate the fine-tuned LM on the $\gsmk$~\citep{cobbe2021gsm8k} test set. \textit{Inter alia}, $\gsmk$'s training set is augmented in $\mmQA$. \citet{yu2023metamath} observe performance in the range of $[0.665, 0.777]$ for $7$B models fine-tuned on $\mmQA$. In our two-step fine-tuned $2$B LM, we obtain competitive performance for increasing orthogonality penalty $\lambda$. \autoref{fig:mathevaluation} represents the accuracy on the $\gsmk$ test set. While just below the performance of the $7$B models, our $2$B models do not significantly differ in performance for increasingly strict penalty. Again, the evaluation accuracy on the model fine-tuned for all non-zero values of $\lambda$ is comparable to the no-penalty LM. If anything, we observe a slight performance increase in the regularization parameter $\lambda$. Together with our findings from~\autoref{fig:orthogonality}, we show that stricter orthogonality does not come at the cost of performance.

\subsection{Localized interventions work}
\label{sec:expinterventions}

In Section~\ref{sec:intervenabilitytheory}, we show that high self-coherence in the dictionary inhibits isolated intervenability on individual features. To confirm this idea, we intervene locally on known features in the SAE and record the downstream effect during generation. In particular, every time a certain feature is activated in the residual stream, we swap it for a different one. We then check if the model remains capable of reasoning about the overarching problem while adapting its generation output to the exchanged features. 
\begin{figure*}[h]
     \centering
     \begin{subfigure}[b]{0.49\textwidth}
         \centering
         \includegraphics[width=\textwidth]{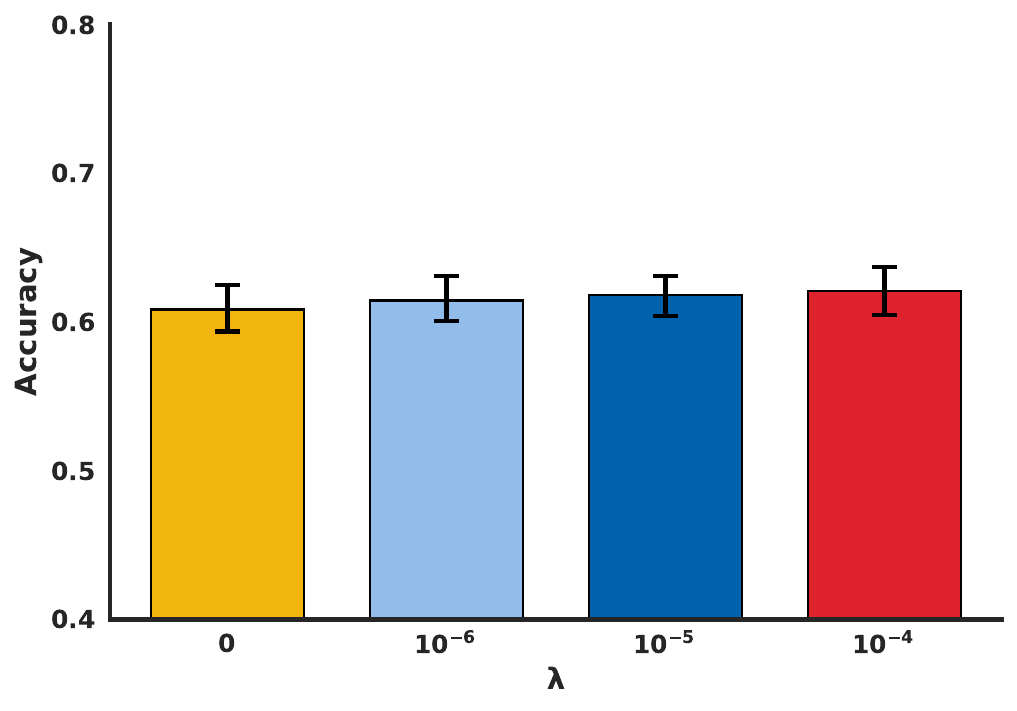}
         \caption{\textbf{Evaluation on mathematical reasoning } We plot mathematical reasoning performance on $3'960$ hand-designed examples after intervening on the SAE. Error bars are basic bootstrap confidence intervals~\citep{efron1979basicbootstrap} with $100$ samples.}
         \label{fig:intervenability_eval}
     \end{subfigure}
     \hfill
     \begin{subfigure}[b]{0.49\textwidth}
         \centering
         \includegraphics[width=\textwidth]{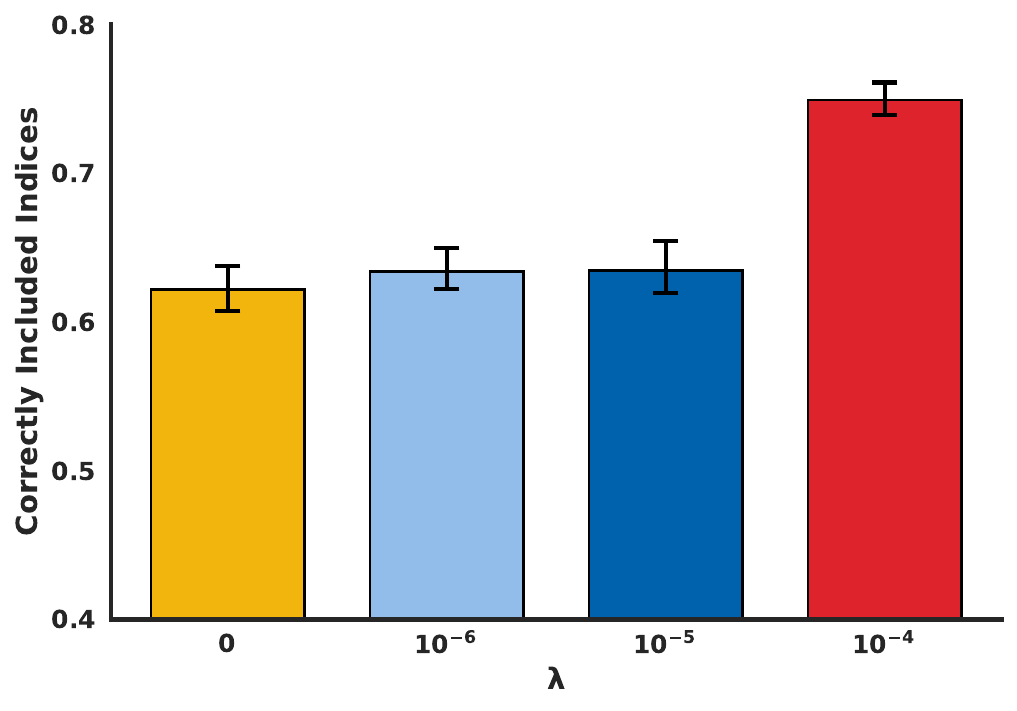}
         \caption{\textbf{Evaluation on inclusion of correct name } We plot the fraction of correctly included first names after intervening on the corresponding feature in the SAE. Error bars are basic bootstrap confidence intervals~\citep{efron1979basicbootstrap} of $100$ random draws.}
         \label{fig:intervenability_include}
     \end{subfigure}
    \caption{\textbf{Intervenability } We intervene on $12$ features in the SAE corresponding to the concept of first names. Then, we evaluate the ability to maintain reasoning performance and the correct generation of the included first name.}
    \label{fig:intervenability}
\end{figure*}

Under the ICM principle, isolated intervention implies that altering one feature does not inform or influence the behavior of other features in the SAE. Translated to natural language modeling, we ideally expect the model to only swap the corresponding feature. To check this, we design the following experiment. Across the different SAEs, we find $12$ features which we interpret as encoding conventionally male first names. We then choose $30$ examples from $\gsmk$ that involve male actors. In each example, we replace the original name with one of the $12$ names and run the standard evaluation task. 

\begin{wrapfigure}{l}{0.5\textwidth}
    \centering
    \begin{tikzpicture}[
    >=stealth,
    shorten >=1pt,
    node distance=1.5cm,
    on grid,
    dimconnection/.style={->, thin, draw=gray!40},
    sparseconnection/.style={->, thin, color=gray!40},
    yellowconnection/.style={->, very thick, color=bauhausyellow},
    blueconnection/.style={->, very thick, color=bauhausblue},
    input/.style={draw=black, circle, minimum size=10pt, fill=bauhausred!20, thick},
    hidden/.style={draw=black, circle, minimum size=10pt, fill=gray!30, thick},
    output/.style={draw=black, circle, minimum size=10pt, fill=bauhausred!20, thick},
    every node/.style={font=\small, align=center}
]
    % --- LAYER 1: Bottom (Input) ---
    \foreach \j in {1,...,5} {
        \node[input] (h1_\j) at ({(\j-3)*1.0cm}, -4.0cm) {};
    }

    % --- LAYER 2: Middle (Hidden) ---
    \foreach \i in {1,...,10} {
        \coordinate (h2_pos) at ({(\i-5.5)*0.75cm}, -2.0cm);
        
        \ifnum\i=3
            \node[hidden, fill=bauhausyellow] (h2_3) at (h2_pos) {};
        \else\ifnum\i=7
            \node[hidden, fill=bauhausblue] (h2_7) at (h2_pos) {};
        \else
            \node[hidden] (h2_\i) at (h2_pos) {};
        \fi\fi
    }

    % --- LAYER 3: Top (Output) ---
    \foreach \j in {1,...,5} {
        \node[output] (h3_\j) at ({(\j-3)*1.0cm}, 0) {};
    }

    % --- CONNECTIONS: Bottom to Top ---
    % \foreach \i in {1,...,10} {
    %     \foreach \j in {1,...,5} {
    %         % From Input (bottom) up to Hidden
    %         \draw[dimconnection] (h1_\j) -- (h2_\i);
    %         % From Hidden up to Output (top)
    %         \draw[dimconnection] (h2_\i) -- (h3_\j);
    %     }
    % }

    % --- SPARSE CONNECTIONS (Upward Flow) ---
    \draw[sparseconnection] (h1_1) -- (h2_1);
    \draw[sparseconnection] (h1_1) -- (h2_8);
    \draw[sparseconnection] (h1_2) -- (h2_1);
    \draw[sparseconnection] (h1_2) -- (h2_3);
    \draw[sparseconnection] (h1_2) -- (h2_8);
    \draw[sparseconnection] (h1_3) -- (h2_8);
    \draw[sparseconnection] (h1_3) -- (h2_1);
    \draw[sparseconnection] (h1_4) -- (h2_1);
    \draw[sparseconnection] (h1_4) -- (h2_3);
    \draw[sparseconnection] (h1_4) -- (h2_8);
    \draw[sparseconnection] (h1_5) -- (h2_1);
    \draw[sparseconnection] (h1_5) -- (h2_3);
    \draw[sparseconnection] (h1_5) -- (h2_8);
    \draw[sparseconnection] (h2_1) -- (h3_1);
    \draw[sparseconnection] (h2_1) -- (h3_2);
    \draw[sparseconnection] (h2_1) -- (h3_3);
    \draw[sparseconnection] (h2_1) -- (h3_4);
    \draw[sparseconnection] (h2_1) -- (h3_5);
    \draw[sparseconnection] (h2_8) -- (h3_1);
    \draw[sparseconnection] (h2_8) -- (h3_2);
    \draw[sparseconnection] (h2_8) -- (h3_3);
    \draw[sparseconnection] (h2_8) -- (h3_4);
    \draw[sparseconnection] (h2_8) -- (h3_5);
    \draw[yellowconnection] (h1_1) -- (h2_3);
    \draw[yellowconnection] (h1_2) -- (h2_3);
    \draw[yellowconnection] (h1_3) -- (h2_3);
    \draw[yellowconnection] (h1_4) -- (h2_3);
    \draw[yellowconnection] (h1_5) -- (h2_3);
    \draw[blueconnection] (h2_7) -- (h3_1);
    \draw[blueconnection] (h2_7) -- (h3_2);
    \draw[blueconnection] (h2_7) -- (h3_3);
    \draw[blueconnection] (h2_7) -- (h3_4);
    \draw[blueconnection] (h2_7) -- (h3_5);
    
    % --- Perpendicular symbols between hidden nodes ---
    % \foreach \i [evaluate=\i as \nexti using int(\i+1)] in {1,...,4} {
    %     \node[draw=none, rectangle, font=\normalsize] 
    %         at ($(h2_\i.east)!0.5!(h2_{\nexti}.west)$) {$\perp$};
    % }
\end{tikzpicture}
\caption{\textbf{Interventions in the SAE } We perform interventions in the SAE by \textcolor{bauhausyellow}{turning off} the intervened on feature and \textcolor{bauhausblue}{adding} to the residual stream the feature we replace with.}
\label{fig:saeinterventions}
\end{wrapfigure}

Having access to the SAE features, however, we are able to swap the feature indices for another one of the $11$ remaining first names. We then check if the model (a) \textit{keeps performance} comparable to~\autoref{fig:mathevaluation}, (b) correctly \textit{drops} the name related to the feature we set to zero, and (c) \textit{includes} the first name we inject into the residual stream through our intervention inside the SAE. Testing all combinations of names on each of the examples, we generate a dataset of $12 * 11 * 30 = 3'960$ examples. \looseness=-1

A well-known issue in mechanistic interpretability, it is non-trivial which value is required to faithfully intervene in the representation space. As the coefficients associated with each feature are continuous, turning the feature on and off is a non-binary problem. In other words, pushing on a feature to be present in the output while keeping generation intact requires testing multiple candidate values~\citep{ameisen2025circuit}. 

\begin{wrapfigure}{l}{0.5\linewidth}    
    \begin{lstlisting}[caption={\textbf{The aqua feature } The top-three input examples activating the $\aqua$ feature at $\lambda = 10^{-4}$ are related to aquaria.},label={lst:aquafive}]
1. found an |u|aquarium| for $10.00
2. in the two |u|aquariums| is twice
3. in the |u|fish tank|. One-third
    \end{lstlisting}
% 1. water of the |u|aqua|rium every week,
% 2. calculate how many |u|aqua|riums she
% 3. 2 tablespoons of |u|aqua|faba is 
% 4. the newly opened |u|aqua|rium and 40
% 5. pass to the |u|aqua|rium for $120
\end{wrapfigure}
We thus sweep over hyperparameters between $10$ and $500$. For any combination of insertion value and orthogonality penalty, the accuracy only moves within an interval of $6$ percentage points. Hence, we decide based on the models' ability to remove the intervened on concept and choose $200$. Confirmation that these findings hold more broadly and the $30$ selected examples are deferred to Appendix~\ref{app:experiments}. % \looseness=-1

\autoref{fig:intervenability} depicts our results. As mentioned, we observe in~\autoref{fig:intervenability_eval} that performance is comparable across different choices for $\lambda$. No setup performs significantly differently from the zero-penalty configuration. Relevant to our hypothesis, for stricter orthogonality penalty, the model significantly more often includes the correct first name in the generation. \autoref{fig:intervenability_include} demonstrates that the orthogonality penalty monotonically increases in $\lambda$. Further, we recover the first name around $74.9\%$ of the time for $\lambda = 10^{-4}$ while only recovering it around $62.2\%$ of the time without penalty. Beyond first names, we extend this analysis to exchanging $\location$ and $\animal$ concepts in the LM output. We also run standard interpretability experiments and obtain constant performance across orthogonality penalties. Further, we run our intervenability experiments on \Llama to strengthen our results. For all this, we refer the reader to Appendix~\ref{app:experiments}. \looseness=-1

To conclude this section, we discuss the insertion of a feature that corresponds to concepts which are not conventionally male first names.~\autoref{lst:aquafive} represents three input examples that activate the same feature. The corresponding explanation in the model with $\lambda = 10^{-4}$ reads\looseness=-1
% \begin{align*}
% \end{align*}
\begin{align*}
    \text{Aquarium capacity or fish population size is the latent concept shared among the spans.}
    % \text{The shared latent concept is the unit "aquarium".}
\end{align*}

Swapping this feature for the feature associated with the first name $\Mike$, we observe that the model adapts this feature to be concordant with the preceding context. In fact, the LM introduces $\Aquaman$. We print the model outputs for this intervention in~\autoref{fig:main}. Moreover,~\autoref{fig:saeinterventions} schematically represents an intervention in the SAE. We note that the feature associated with the concept $\Aquaman$ has no incoming edge while the downstream effect of the feature corresponding to $\Mike$ is turned off.

\subsection{Behavioral and internal measures of interference}
To measure interference beyond accuracy, we report two additional metrics on our intervention experiments. First, we report the $\ROUGE$~\citep{lin2004rouge} between the standard and the intervened generations as an external metric at the token level. This captures how closely the two generations are aligned as a function of the longest common subsequence between generations. In particular, for $S$ and $I$ the standard generation and intervened generation, respectively, we define the longest common subsequence $\LCS$ between $S$ and $I$ as the longest common sequence of tokens between the two. The sequence need not be consecutive. Under an ideal isolated intervention, therefore, only the first name corresponding to the intervened on feature changes while keeping general reasoning intact. For $U_S$ the set of all tokens in our standard generation, $\ROUGE$ is then given by
\begin{align*}
    \ROUGE(S, I) = \frac{\LCS(S, I)}{|U_S|}.
\end{align*}

Second, we compute the symmetric Jensen-Shannon divergence ($\JSD$) between the empirical distributions of the features used in both generations, $P_S$  and $P_I$, as an internal, SAE-level statistic. For $\KL(P || Q)$ the Kullback-Leibler divergence between $P$ and $Q$ and $M = \frac{1}{2} (P_S + P_I)$ a mixture between $P_S$ and $P_I$, the Jensen-Shannon divergence is defined as 
\begin{align*}
    \JSD(P_S || P_I) = \frac{1}{2} \KL(P_S || M) + \frac{1}{2} \KL(P_I || M).
\end{align*}

\autoref{tab:rouge-jsd} displays the two measures for all configurations. Significantly higher $\ROUGE$ and lower $\JSD$ at the strictest orthogonality level $\lambda = 10^{-4}$ relative to the no-penalty setting indicate lower interference. \textit{Cum grano salis}, the results for $\lambda \in \{10^{-6}, 10^{-5}\}$ do not yield definitive conclusions. Studying this further, we present detailed statistics on the additional datasets in Appendix~\ref{app:experiments}. Across datasets, we observe significantly lower $\JSD$ for the strictest penalty compared to $\lambda = 0$ as well as significantly higher $\ROUGE$ on the $\location$ dataset. \looseness=-1

\begin{table}[h!]
\centering
\begin{tabular}{lcc cc}
\toprule
& \multicolumn{2}{c}{$\ROUGE$} & \multicolumn{2}{c}{$\JSD$} \\
\cmidrule(lr){2-3}
\cmidrule(lr){4-5}
$\lambda$ & Mean & Confidence interval & Mean & Confidence interval \\
\midrule
$0$ & 0.763 & [0.756, 0.771] & 0.222 & [0.219, 0.225] \\
$10^{-6}$ & 0.746 & [0.739, 0.750] & 0.254 & [0.252, 0.257] \\
$10^{-5}$ & 0.730 & [0.723, 0.735] & 0.235 & [0.232, 0.239] \\
$10^{-4}$ & 0.773 & [0.766, 0.779] & 0.214 & [0.211, 0.217] \\

\bottomrule
\end{tabular}
\caption{\textbf{Results for interference measures } We evaluate $\ROUGE$ and $\JSD$ on the tokens and features after intervening on the 3'960 examples defined above. We report mean and the basic bootstrap confidence interval over all observations.}
\label{tab:rouge-jsd}
\end{table}
\section{Discussion}
\label{sec:discussion}

Sparse autoencoders (SAEs), which have become a standard tool in mechanistic interpretability, can uncover millions of human-interpretable features in a model’s representation space~\citep{cunningham2023SparseAF,paulo2024automatically}. At the same time, recent work has raised concerns about the reliability of these features. In particular, SAEs trained on the same data with different random seeds often produce substantially different feature sets~\citep{paulo2025sparse}, and features learned by an SAE can themselves be reconstructed from a smaller set of features using a meta-SAE~\citep{leask2025sparse}. Also with respect to feature absorption~\citep{chanin2025absorptionstudyingfeaturesplitting,li2025unlocking}, these results suggest that SAE features may not be truly atomic and may fail to be uniquely identifiable. \looseness=-1

This concern is closely connected to classical results in dictionary learning, where identifiability depends critically on the geometry of the dictionary~\citep{olshausen1997sparse,lee2006efficientsparse}. In particular, recovery guarantees are strongest when the dictionary has low \textit{self-coherence}, meaning that its atoms are nearly orthogonal~\citep{donoho2005stable}. By contrast, when the dictionary is highly self-coherent, multiple distinct sparse combinations of features can generate the same observation~\citep{schnass2008dictionary,garfinkle2016robust}, so that even recovering the sparse coefficients may fail, let alone identifying the underlying features uniquely. Our work suggests that these representational desiderata need not come at the expense of performance: we show that it is possible to preserve fine-tuning performance on a specific task while enforcing activations to be represented as sparse combinations of nearly orthogonal features. Compared to previous work~\citep{korznikov2025ortsaeorthogonalsparseautoencoders}, we enforce global orthogonality across the full dictionary. This points to a possible route toward improving the identifiability and atomicity of SAE features without sacrificing downstream utility. \looseness=-1

Finally, it has recently been hypothesized that adversarial examples are not bugs but a consequence of feature superposition~\citep{gorton2025adversarial}. We view our method as demonstrating a fundamental constraint in this regard: feature interference limits intervention capabilities. When features are represented by non-orthogonal directions, interventions on one feature inevitably spill over to others. Our controlled model-in-the-loop setup makes this limitation explicit and shows that reducing interference improves the reliability of targeted interventions. A natural follow-up question is then whether enforcing greater feature orthogonality reduces superposition and, in turn, mitigates adversarial vulnerabilities. If so, orthogonality could act as a strong architectural bias toward more robust and safer models. Establishing this connection would motivate the development of more scalable versions of our approach. \looseness=-1

\paragraph{Limitations}
Our theoretical study is focused on interventions in the residual stream. Choosing two off-the-shelf SAEs~\citep{karvonen2025saebenchcomprehensivebenchmarksparse,cho2025faithfulsae}, we insert the module in \Llama and \GemmaTwoB only after layers $12$ and $13$, respectively. In consequence, we are constrained to uncovering features that tend to arise in the middle to later layers of the Transformer. Empirically, these concepts often represent high-level concepts relative to the low-level concepts encoded in earlier layers~\citep{dorszewski2025colorsclassesemergenceconcepts}. Additionally, we do not influence the arithmetic problem by changing just the subject of the sentence. In causal terms, we modify a feature that is not an ancestor of the final answer. Thus, the causal mechanism generating the mathematical solution remains unaffected. Extending this with tasks different from mathematical reasoning represents a potential avenue for future research.  \looseness=-1
\section{Conclusion}

We argue that feature entanglement inhibits the ability to perform isolated interventions. Formally relating intervenability to the ICM principle, we provide an upper bound for the expected discrepancy between true and model output in terms of the representation-space interference. We further show that it is possible to fine-tune a LM around a fixed middle layer SAE under orthogonality regularization. While keeping performance on the target dataset unchanged, we reduce the interference between features. In doing so, we perform precise local interventions by exchanging a single concept without affecting others. Moreover, we are able to swap two concepts with greatly improved performance. These results shed light on the practical consequences of feature interference in LMs and suggest that encouraging orthogonality may be a principled way to mitigate it. \looseness=-1

\newpage
\ifcolmsubmission
    \newpage
\fi
\section*{Ethics statement}
This work studies intervenability in language models, with the goal of enabling more precise and localized modifications of internal representations. Improving intervenability can support better control over model behavior and help identify how specific internal components contribute to outputs. These properties are relevant for developing more reliable and predictable AI systems. \looseness=-1

At the same time, increasing the ability to intervene on internal features introduces potential dual-use concerns. More precise control mechanisms could be misused to steer model outputs in unintended or deceptive ways. However, our approach operates at the level of internal representations and requires direct access to model internals and technical expertise. This limits accessibility and reduces the likelihood of misuse in practice. \looseness=-1

Our experiments are conducted on standard benchmarks and do not involve human subjects or sensitive data. The proposed method does not aim to enhance model capabilities, but rather to improve the structure and controllability of learned representations. We believe that advancing intervenability is a useful step toward safer and more transparent AI systems, while recognizing the importance of continued work on safeguards and responsible use. \looseness=-1
\ifcolmpreprint
    \section*{Acknowledgements}

F.D. acknowledges support through a fellowship from the \textit{Hector Fellow Academy}.
\else\ifcolmfinal
    
\fi\fi

\bibliography{bibliography,neurips-bibliography}

@misc{wang2025resatransparentreasoningmodels,
      title={Resa: Transparent Reasoning Models via SAEs}, 
      author={Shangshang Wang and Julian Asilis and Ömer Faruk Akgül and Enes Burak Bilgin and Ollie Liu and Deqing Fu and Willie Neiswanger},
      year={2025},
      eprint={2506.09967},
      archivePrefix={arXiv},
      primaryClass={cs.CL},
      url={https://arxiv.org/abs/2506.09967}, 
}

@article{gorton2025adversarial,
  title={Adversarial Examples Are Not Bugs, They Are Superposition},
  author={Gorton, Liv and Lewis, Owen},
  journal={arXiv preprint arXiv:2508.17456},
  year={2025},
  url={https://arxiv.org/abs/2508.17456}
}

@InProceedings{checn2025lowrankadapting,
  title = 	 {Low-Rank Adapting Models for Sparse Autoencoders},
  author =       {Chen, Matthew and Engels, Joshua and Tegmark, Max},
  booktitle = 	 {Proceedings of the 42nd International Conference on Machine Learning},
  pages = 	 {8077--8092},
  year = 	 {2025},
  editor = 	 {Singh, Aarti and Fazel, Maryam and Hsu, Daniel and Lacoste-Julien, Simon and Berkenkamp, Felix and Maharaj, Tegan and Wagstaff, Kiri and Zhu, Jerry},
  volume = 	 {267},
  series = 	 {Proceedings of Machine Learning Research},
  month = 	 {13--19 Jul},
  publisher =    {PMLR},
  url = 	 {https://proceedings.mlr.press/v267/chen25r.html}
}

@inproceedings{lee2006efficientsparse,
	author = {Lee, Honglak and Battle, Alexis and Raina, Rajat and Ng, Andrew},
	booktitle = {Advances in Neural Information Processing Systems},
	editor = {B. Sch\"{o}lkopf and J. Platt and T. Hoffman},
	publisher = {MIT Press},
	title = {Efficient sparse coding algorithms},
	url = {https://proceedings.neurips.cc/paper_files/paper/2006/file/2d71b2ae158c7c5912cc0bbde2bb9d95-Paper.pdf},
	volume = {19},
	year = {2006}}

@misc{karvonen2025saebenchcomprehensivebenchmarksparse,
      title={SAEBench: A Comprehensive Benchmark for Sparse Autoencoders in Language Model Interpretability}, 
      author={Adam Karvonen and Can Rager and Johnny Lin and Curt Tigges and Joseph Bloom and David Chanin and Yeu-Tong Lau and Eoin Farrell and Callum McDougall and Kola Ayonrinde and Demian Till and Matthew Wearden and Arthur Conmy and Samuel Marks and Neel Nanda},
      year={2025},
      eprint={2503.09532},
      archivePrefix={arXiv},
      primaryClass={cs.LG},
      url={https://arxiv.org/abs/2503.09532}, 
}

@inproceedings{
    gao2025topk,
    title={Scaling and evaluating sparse autoencoders},
    author={Leo Gao and Tom Dupre la Tour and Henk Tillman and Gabriel Goh and Rajan Troll and Alec Radford and Ilya Sutskever and Jan Leike and Jeffrey Wu},
    booktitle={The Thirteenth International Conference on Learning Representations},
    year={2025},
    url={https://openreview.net/forum?id=tcsZt9ZNKD}
}

@article{makhzani2013ksae,
  title={K-sparse autoencoders},
  author={Makhzani, Alireza and Frey, Brendan},
  journal={arXiv preprint arXiv:1312.5663},
  year={2013}
}

@inproceedings{jiang2024ontheorigins,
author = {Jiang, Yibo and Rajendran, Goutham and Ravikumar, Pradeep and Aragam, Bryon and Veitch, Victor},
title = {On the origins of linear representations in large language models},
year = {2024},
publisher = {JMLR.org},
booktitle = {Proceedings of the 41st International Conference on Machine Learning},
articleno = {879},
numpages = {33},
location = {Vienna, Austria},
series = {ICML'24}
}

@article{marks2024enhancing,
  title={Enhancing neural network interpretability with feature-aligned sparse autoencoders},
  author={Marks, Luke and Paren, Alasdair and Krueger, David and Barez, Fazl},
  journal={arXiv preprint arXiv:2411.01220},
  year={2024}
}

@article{leask2025sparse,
  title={Sparse autoencoders do not find canonical units of analysis},
  author={Leask, Patrick and Bussmann, Bart and Pearce, Michael and Bloom, Joseph and Tigges, Curt and Moubayed, Noura Al and Sharkey, Lee and Nanda, Neel},
  journal={arXiv preprint arXiv:2502.04878},
  year={2025}
}

@article{paulo2024automatically,
  title={Automatically interpreting millions of features in large language models},
  author={Paulo, Gon{\c{c}}alo and Mallen, Alex and Juang, Caden and Belrose, Nora},
  journal={arXiv preprint arXiv:2410.13928},
  year={2024}
}

@article{paulo2025sparse,
  title={Sparse autoencoders trained on the same data learn different features},
  author={Paulo, Gon{\c{c}}alo and Belrose, Nora},
  journal={arXiv preprint arXiv:2501.16615},
  year={2025}
}

@article{donoho2005stable,
  title={Stable recovery of sparse overcomplete representations in the presence of noise},
  author={Donoho, David L and Elad, Michael and Temlyakov, Vladimir N},
  journal={IEEE Transactions on information theory},
  volume={52},
  number={1},
  pages={6--18},
  year={2005},
  publisher={IEEE}
}

@article{schnass2008dictionary,
  title={Dictionary preconditioning for greedy algorithms},
  author={Schnass, Karin and Vandergheynst, Pierre},
  journal={IEEE Transactions on Signal Processing},
  volume={56},
  number={5},
  pages={1994--2002},
  year={2008},
  publisher={IEEE}
}

@article{garfinkle2016robust,
  title={Robust Identifiability in Sparse Dictionary Learning},
  author={Garfinkle, Charles J and Hillar, Christopher J},
  journal={arXiv preprint arXiv:1606.06997},
  year={2016}
}

@ARTICLE{welch1974lowerbounds,
  author={Welch, L.},
  journal={IEEE Transactions on Information Theory}, 
  title={Lower bounds on the maximum cross correlation of signals (Corresp.)}, 
  year={1974},
  volume={20},
  number={3},
  pages={397-399},
  doi={10.1109/TIT.1974.1055219}}

@ARTICLE{waldron2003generalizedwelch,
  author={Waldron, S.},
  journal={IEEE Transactions on Information Theory}, 
  title={Generalized Welch bound equality sequences are tight frames}, 
  year={2003},
  volume={49},
  number={9},
  pages={2307-2309},
  doi={10.1109/TIT.2003.815788}}

@article{yu2023metamath,
  title={MetaMath: Bootstrap Your Own Mathematical Questions for Large Language Models},
  author={Yu, Longhui and Jiang, Weisen and Shi, Han and Yu, Jincheng and Liu, Zhengying and Zhang, Yu and Kwok, James T and Li, Zhenguo and Weller, Adrian and Liu, Weiyang},
  journal={arXiv preprint arXiv:2309.12284},
  year={2023}
}

@article{cobbe2021gsm8k,
  title={Training Verifiers to Solve Math Word Problems},
  author={Cobbe, Karl and Kosaraju, Vineet and Bavarian, Mohammad and Chen, Mark and Jun, Heewoo and Kaiser, Lukasz and Plappert, Matthias and Tworek, Jerry and Hilton, Jacob and Nakano, Reiichiro and Hesse, Christopher and Schulman, John},
  journal={arXiv preprint arXiv:2110.14168},
  year={2021}
}

@book{Spirtes2000causation,
  author = {Spirtes, P. and Glymour, C. and Scheines, R.},
  edition = {2nd},
  publisher = {MIT press},
  review = {PC algorithm},
  title = {Causation, Prediction, and Search},
  year = 2000
}

@misc{mencattini2026exploratorycausalinferencesaence,
      title={Exploratory Causal Inference in SAEnce}, 
      author={Tommaso Mencattini and Riccardo Cadei and Francesco Locatello},
      year={2026},
      eprint={2510.14073},
      archivePrefix={arXiv},
      primaryClass={cs.LG},
      url={https://arxiv.org/abs/2510.14073}, 
}

@article{HYVARINEN1999ica,
	author = {Aapo Hyv{\"a}rinen and Petteri Pajunen},
	doi = {https://doi.org/10.1016/S0893-6080(98)00140-3},
	issn = {0893-6080},
	journal = {Neural Networks},
	number = {3},
	pages = {429-439},
	title = {Nonlinear independent component analysis: Existence and uniqueness results},
	url = {https://www.sciencedirect.com/science/article/pii/S0893608098001403},
	volume = {12},
	year = {1999}}

@InProceedings{locatello2019challenging,
  title = 	 {Challenging Common Assumptions in the Unsupervised Learning of Disentangled Representations},
  author =       {Locatello, Francesco and Bauer, Stefan and Lucic, Mario and Raetsch, Gunnar and Gelly, Sylvain and Sch{\"o}lkopf, Bernhard and Bachem, Olivier},
  booktitle = 	 {Proceedings of the 36th International Conference on Machine Learning},
  pages = 	 {4114--4124},
  year = 	 {2019},
  editor = 	 {Chaudhuri, Kamalika and Salakhutdinov, Ruslan},
  volume = 	 {97},
  series = 	 {Proceedings of Machine Learning Research},
  month = 	 {09--15 Jun},
  publisher =    {PMLR},
  url = 	 {https://proceedings.mlr.press/v97/locatello19a.html}
}

@inproceedings{buchholz2023learninglinear,
author = {Buchholz, Simon and Rajendran, Goutham and Rosenfeld, Elan and Aragam, Bryon and Sch\"{o}lkopf, Bernhard and Ravikumar, Pradeep},
title = {Learning linear causal representations from interventions under general nonlinear mixing},
year = {2023},
publisher = {Curran Associates Inc.},
address = {Red Hook, NY, USA},
booktitle = {Proceedings of the 37th International Conference on Neural Information Processing Systems},
articleno = {1968},
numpages = {44},
location = {New Orleans, LA, USA},
series = {NIPS '23}
}

@article{gershgorin1931circle,
  author = {Gershgorin, S.},
  journal = {Izvestija Akademii Nauk SSSR, Serija Matematika},
  number = 3,
  pages = {749--754},
  title = {Uber die Abgrenzung der Eigenwerte einer Matrix},
  volume = 7,
  year = 1931
}

@article{ameisen2025circuit,
  author={Ameisen, Emmanuel and Lindsey, Jack and Pearce, Adam and Gurnee, Wes and Turner, Nicholas L. and Chen, Brian and Citro, Craig and Abrahams, David and Carter, Shan and Hosmer, Basil and Marcus, Jonathan and Sklar, Michael and Templeton, Adly and Bricken, Trenton and McDougall, Callum and Cunningham, Hoagy and Henighan, Thomas and Jermyn, Adam and Jones, Andy and Persic, Andrew and Qi, Zhenyi and Ben Thompson, T. and Zimmerman, Sam and Rivoire, Kelley and Conerly, Thomas and Olah, Chris and Batson, Joshua},
  title={Circuit Tracing: Revealing Computational Graphs in Language Models},
  journal={Transformer Circuits Thread},
  year={2025},
  url={https://transformer-circuits.pub/2025/attribution-graphs/methods.html}
}

@misc{dorszewski2025colorsclassesemergenceconcepts,
      title={From Colors to Classes: Emergence of Concepts in Vision Transformers}, 
      author={Teresa Dorszewski and Lenka Tětková and Robert Jenssen and Lars Kai Hansen and Kristoffer Knutsen Wickstrøm},
      year={2025},
      eprint={2503.24071},
      archivePrefix={arXiv},
      primaryClass={cs.CV},
      url={https://arxiv.org/abs/2503.24071}, 
}

@article{gao2025weight,
  title={Weight-sparse transformers have interpretable circuits},
  author={Gao, Leo and Rajaram, Achyuta and Coxon, Jacob and Govande, Soham V and Baker, Bowen and Mossing, Dan},
  journal={arXiv preprint arXiv:2511.13653},
  year={2025}
}

@article{draye2025sparse,
  title={Sparse Attention Post-Training for Mechanistic Interpretability},
  author={Draye, Florent and Lei, Anson and Posner, Ingmar and Sch{\"o}lkopf, Bernhard},
  journal={arXiv preprint arXiv:2512.05865},
  year={2025}
}

@article{chen2026emergence,
  title={On the Emergence and Test-Time Use of Structural Information in Large Language Models},
  author={Chen, Michelle Chao and Miller, Moritz and Sch{\"o}lkopf, Bernhard and Guo, Siyuan},
  journal={arXiv preprint arXiv:2601.17869},
  year={2026}
}

@inproceedings{lin2004rouge,
    title = "{ROUGE}: A Package for Automatic Evaluation of Summaries",
    author = "Lin, Chin-Yew",
    booktitle = "Text Summarization Branches Out",
    month = jul,
    year = "2004",
    address = "Barcelona, Spain",
    publisher = "Association for Computational Linguistics",
    url = "https://aclanthology.org/W04-1013/",
    pages = "74--81"
}

@inproceedings{cho2025faithfulsae,
  title={Faithful{SAE}: Towards Capturing Faithful Features with Sparse Autoencoders without External Datasets Dependency},
  author={Seonglae Cho and Harryn Oh and Donghyun Lee and Luis Eduardo Rodrigues Vieira and Andrew Bermingham and Ziad El Sayed},
  booktitle={ACL 2025 Student Research Workshop},
  year={2025},
  url={https://openreview.net/forum?id=tBn9ChHGG9}
}

@inproceedings{li2025unlocking,
    title={Unlocking Hierarchical Concept Discovery in Language Models Through Geometric Regularization},
    author={Ed Li and Junyu Ren},
    booktitle={ICLR 2025 Workshop on Building Trust in Language Models and Applications},
    year={2025},
    url={https://openreview.net/forum?id=i31cKXiyim}
}

@misc{korznikov2025ortsaeorthogonalsparseautoencoders,
      title={OrtSAE: Orthogonal Sparse Autoencoders Uncover Atomic Features}, 
      author={Anton Korznikov and Andrey Galichin and Alexey Dontsov and Oleg Rogov and Elena Tutubalina and Ivan Oseledets},
      year={2025},
      eprint={2509.22033},
      archivePrefix={arXiv},
      primaryClass={cs.LG},
      url={https://arxiv.org/abs/2509.22033}, 
}

@misc{chanin2025absorptionstudyingfeaturesplitting,
      title={A is for Absorption: Studying Feature Splitting and Absorption in Sparse Autoencoders}, 
      author={David Chanin and James Wilken-Smith and Tomáš Dulka and Hardik Bhatnagar and Satvik Golechha and Joseph Bloom},
      year={2025},
      eprint={2409.14507},
      archivePrefix={arXiv},
      primaryClass={cs.CL},
      url={https://arxiv.org/abs/2409.14507}, 
}

@misc{dofinetti,
      title={Do Finetti: On Causal Effects for Exchangeable Data}, 
      author={Siyuan Guo and Chi Zhang and Karthika Mohan and Ferenc Huszár and Bernhard Schölkopf},
      year={2024},
      eprint={2405.18836},
      archivePrefix={arXiv},
      primaryClass={stat.ME},
      url={https://arxiv.org/abs/2405.18836}, 
}

@misc{causaldefinetti,
      title={Causal de Finetti: On the Identification of Invariant Causal Structure in Exchangeable Data}, 
      author={Siyuan Guo and Viktor Tóth and Bernhard Schölkopf and Ferenc Huszár},
      year={2024},
      eprint={2203.15756},
      archivePrefix={arXiv},
      primaryClass={stat.ML},
      url={https://arxiv.org/abs/2203.15756}, 
}

@book{elements, author = {Peters, Jonas and Janzing, Dominik and Schölkopf, Bernhard}, title = {Elements of Causal Inference: Foundations and Learning Algorithms}, year = {2017}, isbn = {0262037319}, publisher = {The MIT Press} }

@book{pearl2009,
  author = {Pearl, Judea},
  edition = {2nd},
  publisher = {Cambridge University Press},
  title = {Causality: Models, Reasoning and Inference},
  year = 2009,
  url = {https://bayes.cs.ucla.edu/BOOK-2K/}
}

@article{icp,
	author = {Peters, Jonas and B{\"u}hlmann, Peter and Meinshausen, Nicolai},
	doi = {10.1111/rssb.12167},
	eprint = {https://academic.oup.com/jrsssb/article-pdf/78/5/947/49235444/jrsssb\_78\_5\_947.pdf},
	issn = {1369-7412},
	journal = {Journal of the Royal Statistical Society Series B: Statistical Methodology},
	month = {10},
	number = {5},
	pages = {947-1012},
	title = {Causal Inference by using Invariant Prediction: Identification and Confidence Intervals},
	url = {https://doi.org/10.1111/rssb.12167},
	volume = {78},
	year = {2016}}

@article{Janzing2010CausalCondition,
	author = {Janzing, D and Schölkopf, B},
	journal = {IEEE Transactions on Information Theory},
	number = {10},
	title = {{Causal inference using the algorithmic Markov condition}},
	volume = {56},
	year = {2010}}

@article{efron1979basicbootstrap,
 ISSN = {00905364, 21688966},
 URL = {http://www.jstor.org/stable/2958830},
 author = {B. Efron},
 journal = {The Annals of Statistics},
 number = {1},
 pages = {1--26},
 publisher = {Institute of Mathematical Statistics},
 title = {Bootstrap Methods: Another Look at the Jackknife},
 urldate = {2025-05-11},
 volume = {7},
 year = {1979}
}

@inproceedings{park2024lrh,
author = {Park, Kiho and Choe, Yo Joong and Veitch, Victor},
title = {The linear representation hypothesis and the geometry of large language models},
year = {2024},
publisher = {JMLR.org},
booktitle = {Proceedings of the 41st International Conference on Machine Learning},
articleno = {1605},
numpages = {24},
location = {Vienna, Austria},
series = {ICML'24}
}

@misc{iccr,
      title={Counterfactual reasoning: an analysis of in-context emergence}, 
      author={Moritz Miller and Bernhard Schölkopf and Siyuan Guo},
      year={2025},
      eprint={2506.05188},
      archivePrefix={arXiv},
      primaryClass={cs.CL},
      url={https://arxiv.org/abs/2506.05188}, 
}

@article{bernhard2021towardcrl,
  title = {Toward Causal Representation Learning},
  journal = {Proceedings of the IEEE},
  volume = {109},
  number = {5},
  pages = {612--634},
  year = {2021},
  note = {*equal contribution},
  author = {Sch{\"o}lkopf, B. and Locatello, F. and Bauer, S. and Ke, N. R. and Kalchbrenner, N. and Goyal, A. and Bengio, Y.},
  doi = {10.1109/JPROC.2021.3058954},
  url = {https://ieeexplore.ieee.org/stamp/stamp.jsp?arnumber=9363924}
}

@article{cunningham2023SparseAF,
  title={Sparse Autoencoders Find Highly Interpretable Features in Language Models},
  author={Hoagy Cunningham and Aidan Ewart and Logan Riggs Smith and Robert Huben and Lee Sharkey},
  journal={ArXiv},
  year={2023},
  volume={abs/2309.08600},
  url={https://api.semanticscholar.org/CorpusID:261934663}
}

@inproceedings{Geiger2023CausalAA,
  title={Causal Abstraction: A Theoretical Foundation for Mechanistic Interpretability},
  author={Atticus Geiger and Duligur Ibeling and Amir Zur and Maheep Chaudhary and Sonakshi Chauhan and Jing Huang and Aryaman Arora and Zhengxuan Wu and Noah D. Goodman and Christopher Potts and Thomas F. Icard},
  year={2023},
  url={https://api.semanticscholar.org/CorpusID:255749463}
}

@article{hu2021lora,
  title={LoRA: Low-Rank Adaptation of Large Language Models},
  author={J. Edward Hu and Yelong Shen and Phillip Wallis and Zeyuan Allen-Zhu and Yuanzhi Li and Shean Wang and Weizhu Chen},
  journal={ArXiv},
  year={2021},
  volume={abs/2106.09685},
  url={https://api.semanticscholar.org/CorpusID:235458009}
}

@article{elhage2022superposition,
   title={Toy Models of Superposition},
   author={Elhage, Nelson and Hume, Tristan and Olsson, Catherine and Schiefer, Nicholas and Henighan, Tom and Kravec, Shauna and Hatfield-Dodds, Zac and Lasenby, Robert and Drain, Dawn and Chen, Carol and Grosse, Roger and McCandlish, Sam and Kaplan, Jared and Amodei, Dario and Wattenberg, Martin and Olah, Christopher},
   year={2022},
   journal={Transformer Circuits Thread},
   url={https://transformer-circuits.pub/2022/toy_model/index.html}
}

@article{olshausen1997sparse,
	author = {Bruno A. Olshausen and David J. Field},
	doi = {https://doi.org/10.1016/S0042-6989(97)00169-7},
	issn = {0042-6989},
	journal = {Vision Research},
	number = {23},
	pages = {3311-3325},
	title = {Sparse coding with an overcomplete basis set: A strategy employed by V1?},
	url = {https://www.sciencedirect.com/science/article/pii/S0042698997001697},
	volume = {37},
	year = {1997}}
\bibliographystyle{colm/colm2026_conference}

\appendix
\section{Self-coherence and Welch bounds}
\label{app:welch}

\subsection{Self-coherence}
\label{sec:app-self-coherence}
To illustrate the relevance of self-coherence for sparsity, we restate the below theorem~\citep{donoho2005stable} and provide the proof. Theorem~\ref{thm:self-coherence} reflects a clear relationship between sparsity and self-coherence. Note that we define self-coherence here as the maximum interference between features, $\mu_\text{max}$. Below, we discuss how they relate to~\eqref{eqn:self-coherence} through the Welch bounds. First, low self-coherence corresponds to stricter almost orthogonality between dictionary atoms. Second, this yields a bound on the number of atoms required for input recovery $\K$ to ensure identifiability. \looseness=-1

\begin{theorem}[Self-coherence bound for uniqueness] 
Let $\mD \in \Rmd$ have unit-norm columns and denote by $\mu_\text{max}$  self-coherence as defined in~\eqref{eqn:self-coherence}. If
\begin{align*}
    \K < \frac{1}{2} \left(1+\frac{1}{\mu_\text{max}}\right),
\end{align*}
every $\K$-sparse representation $\vxtilde = \mD \vz$ is unique. That is,
for $\mD \vz = \mD \tilde{\vz}$ with $\|\vz\|_0, \|\tilde{\vz}\|_0 \leq \K$, it holds that $\vz = \tilde{\vz}$.
\label{thm:self-coherence}
\end{theorem}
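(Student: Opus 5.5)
We argue by contradiction through the spark of $\mD$, the smallest number of linearly dependent columns. Suppose $\mD \vz = \mD \tilde{\vz}$ with $\|\vz\|_0, \|\tilde{\vz}\|_0 \leq \K$ and $\vz \neq \tilde{\vz}$. Then $\vv := \vz - \tilde{\vz}$ is a nonzero vector in the kernel of $\mD$ with $\|\vv\|_0 \leq 2\K$. The columns of $\mD$ indexed by the support $S$ of $\vv$ are therefore linearly dependent. The plan is to show that any set of $|S| < 1 + 1/\mu_\text{max}$ unit-norm columns is linearly independent. The hypothesis $\K < \frac12(1+1/\mu_\text{max})$ gives exactly $2\K < 1 + 1/\mu_\text{max}$, so this yields the contradiction.

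For the independence step, consider the Gram matrix $\mG_S = \mD_S^\top \mD_S \in \mathbb{R}^{|S|\times|S|}$ of the selected columns. Because the columns are unit-norm, its diagonal entries equal $1$. Its off-diagonal entries $\langle \vf_j, \vf_k\rangle$ are bounded in absolute value by $\mu_\text{max} = \max_{j\neq k}|\langle \vf_j,\vf_k\rangle|$. Here we read $\mu_\text{max}$ as the maximal coherence, as the appendix states, and not as the mean quantity in~\eqref{eqn:self-coherence}. By the Gershgorin circle theorem~\citep{gershgorin1931circle}, every eigenvalue $\lambda$ of $\mG_S$ satisfies
\begin{align*}
    |\lambda - 1| \leq \sum_{k \in S, k\neq j} |\langle \vf_j,\vf_k\rangle| \leq (|S|-1)\mu_\text{max}
\end{align*}
for some $j \in S$. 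If $|S| < 1 + 1/\mu_\text{max}$, then $(|S|-1)\mu_\text{max} < 1$, so every eigenvalue is strictly positive. Hence $\mG_S$ is positive definite and $\mD_S$ has full column rank. This contradicts $\mD_S \vv_S = \zeros$ with $\vv_S \neq \zeros$, and we conclude $\vz = \tilde{\vz}$.

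The argument is routine, and the only delicate points are bookkeeping. First, the strict inequality has to carry through: $\K < \frac12(1+1/\mu_\text{max})$ must translate into $|S| \leq 2\K < 1 + 1/\mu_\text{max}$, which then gives a strict Gershgorin radius below $1$. Second, the edge case $\mu_\text{max} = 0$ should be read as an unrestricted condition. In that case the columns are orthonormal, which forces $d \leq m$, and uniqueness is immediate.
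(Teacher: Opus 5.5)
Your proof is correct and matches the paper's argument essentially step for step: the difference vector has support of size at most $2\K$, the Gram matrix of those columns has unit diagonal with off-diagonal entries bounded by $\mu_\text{max}$, and Gershgorin gives a strictly positive smallest eigenvalue, which contradicts the linear dependence. Your version is slightly more careful than the paper's, since you bound the off-diagonal entries in absolute value (as Gershgorin requires) and state explicitly that $\mu_\text{max}$ must be read as the maximal coherence rather than the mean quantity in~\eqref{eqn:self-coherence}.
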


\begin{proof}
Assume $\vxtilde = \mD \vz = \mD \tilde{\vz}$ with $\vz \neq \tilde{\vz}$, both $\K$-sparse.  
Then $\vs = \vz - \tilde{\vz} \neq \zeros$ satisfies $\mD\vs = \zeros$. Let T = $\mathrm{supp}(\vs)$, then $|T| \leq 2K$. Let $\mG = \mD_T^\transpose \mD_T$.
All diagonal elements of $\mG$ are equal to $1$ with off-diagonal elements upper-bounded by $\mu_\text{max}$. Formally, $\mG_{j,j} = 1, \forall j \in [d]$ and $\mG_{i,j} \leq \mu_\text{max}$ for $i \neq j$ with $i, j \in [d]$. By Gershgorin's circle theorem~\citep{gershgorin1931circle}, \looseness=-1
\begin{align*}
    \lambda_{\min}(\mG) \geq 1 - (2\K - 1)\mu_\text{max} > 0,
\end{align*}
by assumption on $\K$. Thus $\mG$ is positive definite and the atoms in $T$ are independent. We arrive at a contradiction.
\end{proof}

\subsection{Welch bounds}
% We symbolize unit-norm vectors as $\normvec{\vv}$ for arbitrary vector $\vv$. 

For $\{\normvec{\vf_j}\}_{j \in [d]}$ unit-norm vectors with $\normvec{\vf_j} \in \Rmodel$, define the maximum similarity by $ \normvec{\mu_\text{max}} = \max_{j \neq k} |\langle \normvec{\vf_j}, \normvec{\vf_k} \rangle|$. Then, for $s \in \N_{>0}$,
\begin{align*}
    \normvec{\mu_\text{max}}^{2s} \geq \frac{1}{d-1}\left[ \frac{d}{\binom{m+s-1}{s}} - 1 \right],
\end{align*}
with $\N_{>0}$ denoting the non-zero natural numbers. In addition, Welch obtains a bound on the mean similarity, which we state in~\eqref{eqn:generalizedwelch},
\begin{align*}
    \frac{1}{d^2} \sum_{j = 1}^d \sum_{k = 1}^d |\langle \normvec{\vf_j}, \normvec{\vf_k} \rangle|^{2s} \geq \frac{1}{\binom{m+s-1}{s}}
\end{align*}
for which~\citet{waldron2003generalizedwelch} relaxes the unit-norm assumption,
\begin{align*}
        \frac{\sum_{j = 1}^d \sum_{k = 1}^d |\langle \vf_j, \vf_k \rangle|^{2s}}{\left(\sum_{j = 1}^d \|\vf_j\|\right)^{2s}} \geq \frac{1}{\binom{m+s-1}{s}}.
\end{align*}
with $\vf_j$ and $\vf_k$ arbitrary.
\section{Post-intervention interference between features}
\label{app:interference}

% \subsection{Orthogonality penalty equivalence}
% \moritz{write}$\|\mI - \mD^\transpose \mD\|^2$

\subsection{Causal intervention mismatch between the true output and the model output}

\begin{theorem}[Upper bound on interference]
Let $\mD=\AtomsBracket \in \Rmd$ and assume that
\[
\|\vf_j\|_2=1,\qquad j \in [d].
\]
Fix $j \in [d]$ and \(\delta\in\mathbb{R}\). Then
\[
\mathbb{E}\Bigl[\|Y^{(j,\delta)}-\widehat Y^{(j,\delta)}\|_2^2\Bigr]
\le
L^2\|\mA\|_{\mathrm{op}}^2\,
\mathbb{E}\Bigl[\|(\mG-\mI)(Z+\delta \ve_j)\|_2^2\Bigr].
\]
where $L$ is a bound on the operator norm of the Jacobian $\veta\mapsto p_\veta$. Consequently,
\[
\mathbb{E}\Bigl[\|Y^{(j,\delta)}-\widehat Y^{(j,\delta)}\|_2^2\Bigr]
\le
2L^2\|\mA\|_{\mathrm{op}}^2
\Bigl(
\mathbb{E}[\|(\mG-\mI)Z\|_2^2]
+
\delta^2\sum_{k\neq j}\langle \vf_k,\vf_j\rangle^2
\Bigr).
\]
For $\mu(\mD)$ as defined in~\eqref{eqn:self-coherence}, averaging over \(j\) yields
\[
\frac1d\sum_{j=1}^d
\mathbb{E}\Bigl[\|Y^{(j,\delta)}-\widehat Y^{(j,\delta)}\|_2^2\Bigr]
\le
2L^2\|\mA\|_{\mathrm{op}}^2\,
\mathbb{E}[\|(\mG-\mI)Z\|_2^2]
+
2L^2\|\mA\|_{\mathrm{op}}^2\,\delta^2(d-1)\mu(\mD),
\]
% \[
% \mu(\mD):=\frac{1}{d(d-1)}\sum_{j\neq k}\langle \vf_j,\vf_k\rangle^2.
% \]
In the standard softmax case, in particular, one may take \(L=\frac12\) such that
\[
\mathbb{E}\Bigl[\|Y^{(j,\delta)}-\widehat Y^{(j,\delta)}\|_2^2\Bigr]
\le
\frac14\|\mA\|_{\mathrm{op}}^2\,
\mathbb{E}\Bigl[\|(\mG-\mI)(Z+\delta \ve_j)\|_2^2\Bigr].
\]
\end{theorem}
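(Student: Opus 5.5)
The proof goes in four steps: a pointwise Lipschitz bound, a triangle-type split, averaging over $j$, and the softmax constant. Throughout, write $p_{\veta}$ for the output probability vector of the exponential-family model with natural parameter $\veta$, so that $Y^{(j,\delta)} = p_{\mA Z^{(j,\delta)}}$ and $\widehat Y^{(j,\delta)} = p_{\mA\mG Z^{(j,\delta)}}$.

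\emph{Pointwise bound.} The map $\veta\mapsto p_{\veta}$ is smooth with Jacobian bounded in operator norm by $L$. By the mean value inequality along the segment between $\veta$ and $\widehat\veta$, it is $L$-Lipschitz, so pointwise
\[
\|Y^{(j,\delta)}-\widehat Y^{(j,\delta)}\|_2 \le L\,\|\mA(\mI-\mG)Z^{(j,\delta)}\|_2 \le L\|\mA\|_{\mathrm{op}}\|(\mG-\mI)(Z+\delta\ve_j)\|_2 .
\]
Squaring and taking expectations under $\PZ$ gives the first inequality.

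\emph{Splitting the intervention term.} Apply $\|a+b\|^2\le 2\|a\|^2+2\|b\|^2$ with $a=(\mG-\mI)Z$ and $b=\delta(\mG-\mI)\ve_j$. Since $\mG=\mD^\top\mD$ has unit diagonal by the norm assumption, $(\mG-\mI)\ve_j$ is the $j$-th column of $\mG$ with its diagonal entry zeroed. Its entries are $\langle\vf_k,\vf_j\rangle$ for $k\neq j$, so $\|b\|_2^2=\delta^2\sum_{k\neq j}\langle\vf_k,\vf_j\rangle^2$. This gives the second display.

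\emph{Averaging over $j$.} The baseline term does not depend on $j$. For the intervention term, $\frac1d\sum_j\sum_{k\neq j}\langle\vf_k,\vf_j\rangle^2=(d-1)\mu(\mD)$ directly from \eqref{eqn:self-coherence}, reading the outer sum there as running to $d$.

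\emph{Softmax constant.} The remaining task is to justify $L=\tfrac12$ in the softmax case. For logits $\vgamma(\vy)^\top\veta$, the Jacobian of $p$ with respect to $\veta$ factors as $(\mathrm{diag}(p)-pp^\top)\Gamma$, where $\Gamma$ stacks the embeddings $\vgamma(\vy)^\top$. I would either take $\vgamma$ to be the identity embedding, absorbing $\Gamma$ into $\mA$, or bound $\|\Gamma\|_{\mathrm{op}}$ separately.

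This step is the main obstacle. The operator norm of $\mathrm{diag}(p)-pp^\top$ is at most $\max_i p_i(1-p_i)+\dots$, and the clean bound $\tfrac12$ needs an argument. One route is Gershgorin: row $i$ has diagonal $p_i(1-p_i)$ and off-diagonal absolute sum $p_i(1-p_i)$, giving $\lambda_{\max}\le 2\max_i p_i(1-p_i)\le\tfrac12$.

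With $L=\tfrac12$ the constants work out as follows. The first inequality picks up $L^2=\tfrac14$. The split then multiplies by $2$, giving $2L^2=\tfrac12$, which matches the main-text constant of Theorem~\ref{thm:upper-bound}.
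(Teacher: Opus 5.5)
Your proposal is correct and follows the paper's proof essentially step by step. The shared steps are the Lipschitz bound from the bounded Jacobian, the split $\|a+b\|_2^2\le 2\|a\|_2^2+2\|b\|_2^2$, reading $(\mG-\mI)\ve_j$ as the off-diagonal part of the $j$-th column of $\mG$, and averaging via $\mu(\mD)$ (where you rightly read the outer sum in \eqref{eqn:self-coherence} as running to $d$). The only addition is your Gershgorin justification of $L=\tfrac12$ for softmax, which the paper merely asserts; your caveat that this needs $\vgamma$ to be the identity embedding (or $\|\Gamma\|_{\mathrm{op}}\le 1$, since the Jacobian is $(\mathrm{diag}(p)-pp^\top)\Gamma$) is a point the paper leaves implicit.
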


% Let \(D=[f_1,\dots,f_d]\in\mathbb{R}^{m\times d}\) and assume that
% \[
% \|f_j\|_2=1,\qquad j=1,\dots,d.
% \]
% Fix \(j\in\{1,\dots,d\}\) and \(\delta\in\mathbb{R}\). Then
% \[
% \mathbb{E}\Bigl[\|Y^{(j,\delta)}-\widehat Y^{(j,\delta)}\|_2^2\Bigr]
% \le
% L^2\|A\|_{\mathrm{op}}^2\,
% \mathbb{E}\Bigl[\|(G-I)(Z+\delta e_j)\|_2^2\Bigr].
% \]
% Consequently,
% \[
% \mathbb{E}\Bigl[\|Y^{(j,\delta)}-\widehat Y^{(j,\delta)}\|_2^2\Bigr]
% \le
% 2L^2\|A\|_{\mathrm{op}}^2
% \Bigl(
% \mathbb{E}[\|(G-I)Z\|_2^2]
% +
% \delta^2\sum_{k\neq j}\langle f_k,f_j\rangle^2
% \Bigr).
% \]
% Averaging over \(j\) yields
% \[
% \frac1d\sum_{j=1}^d
% \mathbb{E}\Bigl[\|Y^{(j,\delta)}-\widehat Y^{(j,\delta)}\|_2^2\Bigr]
% \le
% 2L^2\|A\|_{\mathrm{op}}^2\,
% \mathbb{E}[\|(G-I)Z\|_2^2]
% +
% 2L^2\|A\|_{\mathrm{op}}^2\,\delta^2(d-1)\mu(\mD),
% \]
% where
% \[
% \mu(\mD):=\frac{1}{d(d-1)}\sum_{j\neq k}\langle f_j,f_k\rangle^2.
% \]
% In particular, in the standard softmax case one may take \(L=\frac12\), giving
% \[
% \mathbb{E}\Bigl[\|Y^{(j,\delta)}-\widehat Y^{(j,\delta)}\|_2^2\Bigr]
% \le
% \frac14\|A\|_{\mathrm{op}}^2\,
% \mathbb{E}\Bigl[\|(G-I)(Z+\delta e_j)\|_2^2\Bigr].
% \]
% \end{theorem}

\begin{proof}
Under the intervention \(\doop(Z_j:=Z_j+\delta)\), the latent state becomes
\[
Z^{(j,\delta)}=Z+\delta \ve_j.
\]
The true post-intervention output distribution is therefore parameterized by
\[
\veta_Y^{(j,\delta)}:=\mA(Z+\delta \ve_j),
\]
while the model post-intervention output distribution is parameterized by
\[
\veta_{\widehat Y}^{(j,\delta)}:=\mA \mG(Z+\delta \ve_j).
\]
Subtracting gives
\[
\veta_{\widehat Y}^{(j,\delta)}-\veta_Y^{(j,\delta)}
=
\mA(\mG - \mI)(Z+\delta \ve_j).
\]
Hence,
\[
\|\veta_{\widehat Y}^{(j,\delta)}-\veta_Y^{(j,\delta)}\|_2
\le
\|\mA\|_{\mathrm{op}}\,
\|(\mG - \mI)(Z+\delta \ve_j)\|_2.
\]

We now pass from parameters to output distributions. Since the Jacobian of \(\veta\mapsto p_\veta\) is bounded by \(L\), the mean value theorem implies
\[
\|p_\veta-p_{\veta'}\|_2
\le
L\|\veta-\veta'\|_2
\qquad\text{for all }\veta,\veta'\in\mathbb{R}^q.
\]
Applying this with
\[
\veta=\veta_Y^{(j,\delta)},
\qquad
\veta'=\veta_{\widehat Y}^{(j,\delta)},
\]
yields
\[
\|Y^{(j,\delta)}-\widehat Y^{(j,\delta)}\|_2
\le
L\,\|\veta_{\widehat Y}^{(j,\delta)}-\veta_Y^{(j,\delta)}\|_2.
\]
Combining the two inequalities, we obtain
\[
\|Y^{(j,\delta)}-\widehat Y^{(j,\delta)}\|_2
\le
L\|\mA\|_{\mathrm{op}}\,
\|(\mG - \mI)(Z+\delta \ve_j)\|_2.
\]
Squaring both sides and taking expectation proves the first bound,
\[
\mathbb{E}\Bigl[\|Y^{(j,\delta)}-\widehat Y^{(j,\delta)}\|_2^2\Bigr]
\le
L^2\|\mA\|_{\mathrm{op}}^2\,
\mathbb{E}\Bigl[\|(\mG - \mI)(Z+\delta \ve_j)\|_2^2\Bigr].
\]

To obtain the more explicit estimate, expand
\[
(\mG - \mI)(Z+\delta \ve_j)=(\mG - \mI)Z+\delta(\mG - \mI)\ve_j.
\]
Using
\[
\|\vu+\vv\|_2^2\le 2\|\vu\|_2^2+2\|\vv\|_2^2,
\]
we get
\[
\|(\mG - \mI)(Z+\delta \ve_j)\|_2^2
\le
2\|(\mG - \mI)Z\|_2^2+2\delta^2\|(\mG - \mI)\ve_j\|_2^2.
\]
Substituting into the previous inequality gives
\[
\mathbb{E}\Bigl[\|Y^{(j,\delta)}-\widehat Y^{(j,\delta)}\|_2^2\Bigr]
\le
2L^2\|\mA\|_{\mathrm{op}}^2
\Bigl(
\mathbb{E}[\|(\mG - \mI)Z\|_2^2]
+
\delta^2\|(\mG - \mI)\ve_j\|_2^2
\Bigr).
\]

It remains to compute \(\|(\mG - \mI)\ve_j\|_2^2\). Since \(\mG=\mD^\top \mD\), we have
\[
\mG_{kj}=\langle \vf_k,\vf_j\rangle.
\]
Because the columns are unit norm, \(\mG_{jj}=1\). Thus
\[
((\mG - \mI)\ve_j)_k=
\begin{cases}
0,&k=j,\\[3pt]
\langle \vf_k,\vf_j\rangle,&k\neq j,
\end{cases}
\]
and therefore
\[
\|(\mG - \mI)\ve_j\|_2^2
=
\sum_{k\neq j}\langle \vf_k,\vf_j\rangle^2.
\]
Substituting this identity proves the second bound.

Finally, averaging over \(j\), we obtain
\[
\frac1d\sum_{j=1}^d
\mathbb{E}\Bigl[\|Y^{(j,\delta)}-\widehat Y^{(j,\delta)}\|_2^2\Bigr]
\le
2L^2\|\mA\|_{\mathrm{op}}^2\,
\mathbb{E}[\|(\mG - \mI)Z\|_2^2]
+
2L^2\|\mA\|_{\mathrm{op}}^2\,\delta^2
\frac1d\sum_{j=1}^d\sum_{k\neq j}\langle \vf_k,\vf_j\rangle^2.
\]
By the definition of \(\mu(\mD)\) in~\eqref{eqn:self-coherence},
\[
\sum_{j=1}^d\sum_{k\neq j}\langle \vf_k,\vf_j\rangle^2
=
d(d-1)\mu(\mD),
\]
we obtain the averaged inequality.
\end{proof}

\subsection{\(\K\)-sparse latent states.}
% The theorem shows that the post-intervention mismatch is controlled by a baseline term,
% \[
% \mathbb E[\|(\mG - \mI)Z\|_2^2],
% \]
% and an intervention-specific term,
% \[
% \delta^2\sum_{k\neq j}\langle f_k,f_j\rangle^2.
% \]
% Thus, when the columns of \(D\) are nearly orthogonal, interventions on a latent feature induce only limited distortion in the model output. The constant \(L\) isolates the sensitivity of the output distribution to perturbations of its parameterization, with \(L=\frac12\) in the standard softmax case.

Suppose that \(Z\) has uniformly random support of size \(\K\), and that conditioned on its support the nonzero coordinates are independent, mean zero, and with variance \(\sigma^2\). Then,
\[
\mathbb E[Z_i^2]=\frac{\K}{d}\sigma^2,
\qquad
\mathbb E[Z_iZ_j]=0 \quad (i\neq j),
\]
so
\[
\mathbb E[ZZ^\top]=\frac{\K}{d}\sigma^2 \mI.
\]
Therefore
\[
\mathbb E[\|(\mG - \mI)Z\|_2^2]
=
\mathbb E \bigl[Z^\top (\mG - \mI)^2 Z\bigr]
=
\operatorname{tr} \bigl((\mG - \mI)^2\mathbb E[ZZ^\top]\bigr)
=
\frac{\K}{d}\sigma^2\,\operatorname{tr} \bigl((\mG - \mI)^2\bigr).
\]
Since \(\mG-\mI\) is symmetric,
\[
\operatorname{tr} \bigl((\mG - \mI)^2\bigr)=\|\mG-\mI\|_F^2,
\]
and because \(\mG=\mD^\top \mD\) with unit-norm columns,
\[
\|\mG-\mI\|_F^2
=
\sum_{j = 1}^d\sum_{k\neq j}\langle \vf_k,\vf_j\rangle^2
=
d(d-1)\mu(\mD).
\]
Hence
\[
\mathbb E[\|(\mG - \mI)Z\|_2^2]
=
\K\sigma^2(d-1)\mu(\mD).
\]
Substituting this into the averaged bound of the theorem yields
\[
\frac1d\sum_{j=1}^d
\mathbb E\Bigl[\|Y^{(j,\delta)}-\widehat Y^{(j,\delta)}\|_2^2\Bigr]
\le
2L^2\|\mA\|_{\mathrm{op}}^2(d-1)\mu(\mD)\bigl(\K\sigma^2+\delta^2\bigr).
\]
In particular, in the standard softmax case,
\[
\frac1d\sum_{j=1}^d
\mathbb E\Bigl[\|Y^{(j,\delta)}-\widehat Y^{(j,\delta)}\|_2^2\Bigr]
\le
\frac12\|\mA\|_{\mathrm{op}}^2(d-1)\mu(\mD)\bigl(\K\sigma^2+\delta^2\bigr).
\]
% In this regime, both the baseline term and the intervention-specific term are governed by the same coherence quantity \(\mu(\mD)\), making the role of decoder geometry fully explicit.

\subsection{Empirical evidence}
\begin{figure}
    \centering
    \includegraphics[width=\linewidth]{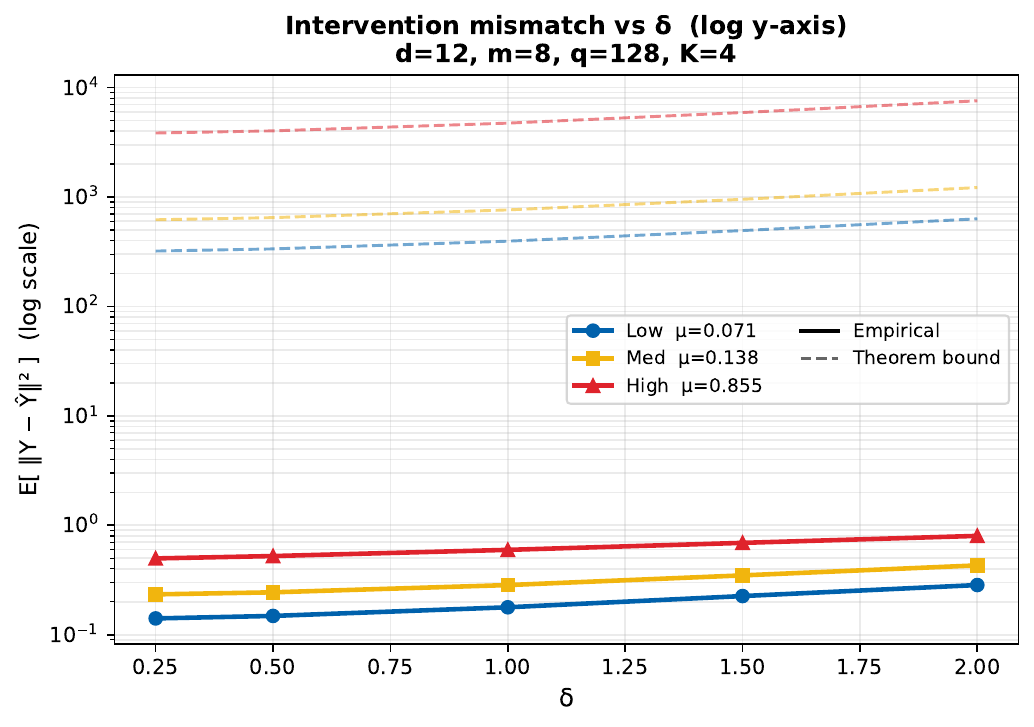}
\caption{\textbf{Intervention mismatch } We plot the discrepancy between predicted and ground truth output for different levels of self-coherence in an overcomplete dictionary. The dotted lines represent the bound derived in~\autoref{thm:upper-bound} for the three self-coherence levels. \looseness=-1}
\label{fig:mismatch}
\end{figure}
In the interference context, we use the terms "discrepancy" and "mismatch" interchangeably. \autoref{fig:mismatch} depicts the discrepancy for different levels of self-coherence. The $\medium$ self-coherence of $0.138$ has been achieved by a decoder $\mD$ with random Gaussian, L2-normalized columns. The $\high$ self-coherence of $0.855$ we obtain by creating columns which are all small perturbations of a shared direction. Constructing the overcomplete dictionary in a structured way, we reach $\low$ self-coherence of $0.071$. In addition, we display the corresponding self-coherence bounds derived in~\autoref{thm:upper-bound}. In this example, the bounds are not tight. We observe, however, that the bound for the $\high$ self-coherence setting is up to an order of magnitude above the upper bound for $\low$ self-coherence.

In spirit, this is similar to our approach of fine-tuning the SAE under an orthogonality penalty. There, too, we aim to add low self-coherence as an inductive bias to enable isolated interventions. Here, we take a full orthonormal basis in $\Rmm$ for $m = 8$ and add the remaining $d - m = 12 - 8 = 4$ columns through Givens-rotated combinations. In particular, for $j \in [d-m]$, we define the angle $\theta_j = \frac{j \pi}{d-m+1}$ and set 
\begin{align}
    \ve_j = \cos (\theta_j) \vf_{j \text{ mod } m} + \sin (\theta_j) \vf_{(j+1) \text{ mod } m}
\label{eqn:givens}
\end{align}
for $\{\vf_j\}_{j \in [m]}$ the orthonormal basis vectors. We then fill the $d-m$ columns of $\mD$ through $\{\ve_j\}_{j \in [d-m]}$ and obtain the final dictionary by renormalizing each column to unit norm.

\begin{figure}
    \centering
    \includegraphics[width=\linewidth]{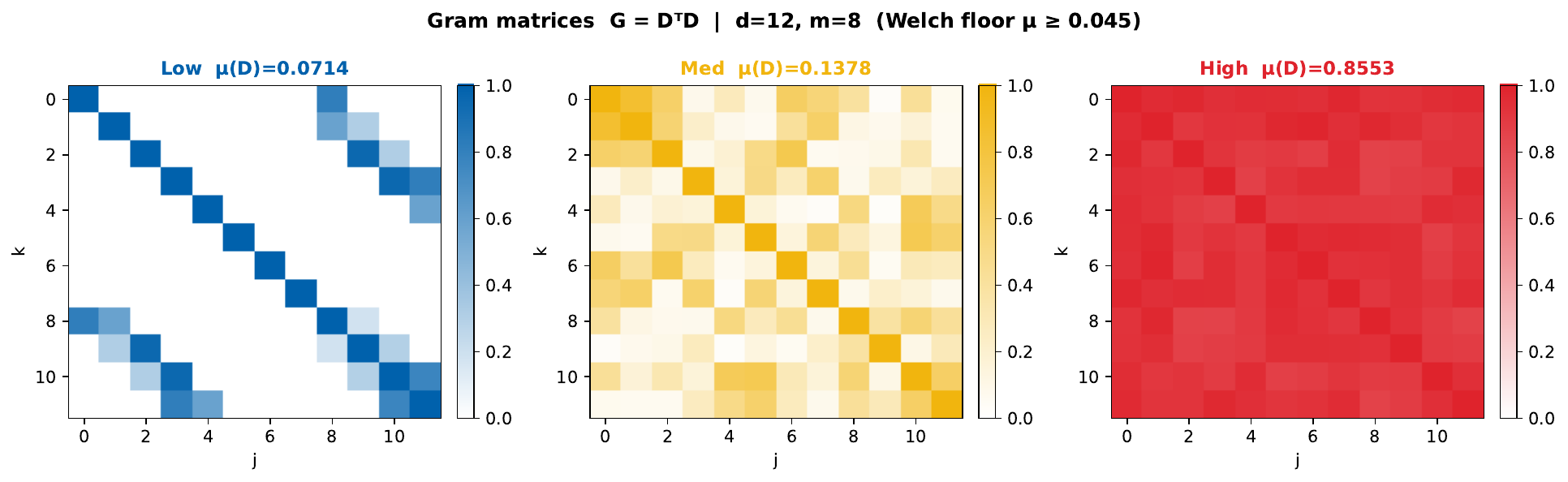}
    \caption{\textbf{Coherence of decoders for different levels of self-coherence }}
    \label{fig:coherencedecoders}
\end{figure}
Finally,~\autoref{fig:coherencedecoders} represents the gram matrix $\mG$ for each of the three levels of self-coherence as a heatmap. While the interference between features increases with self-coherence, we cab attribute the $\low$ self-coherence behavior on the off-diagonals to the generation of the dictionary $\mD$ in~\eqref{eqn:givens}.
\section{Experimental Details}
\label{app:experiments}

\subsection{Setup}
All experiments were conducted using multi-GPU training. Fine-tuning the SAE on the $\mmQA$ dataset required approximately $8$ hours using $4$ GPUs with $80$GB of memory each. Subsequent fine-tuning of the language model with the SAE integrated into the architecture required an additional $16$ hours under the same hardware configuration.

We employed parameter-efficient fine-tuning via LoRA with rank $128$ and dropout rate $0.05$. The underlying model has a hidden dimension of $2304$. Training was performed with a per-device batch size of $4$. We used an initial learning rate of $5\times10^{-5}$, combined with $200$ warm-up steps followed by a cosine decay schedule. Fine-tuning the LLM was carried out with gradient checkpointing enabled to reduce memory usage, and gradient norms were clipped to a maximum value of $1$ to ensure training stability. During generation, we greedily decode all tokens. \looseness=-1

\subsection{Interpretability}
% \begin{itemize}
%     % \item we show interpretability scores on a set similar to the orthogonality scores in 4.1
%     % \item they are (close to) significantly better using Efron basic bootstrap confidence intervals~\citep{efron1979basicbootstrap} 
%     % \item speak about superposition
%     % \item we have interpretability score as selection of 1 out of 5 with \LlamaThree
%     \item we let the description be at most $20$ words and provide $20$ spans to the model sampled from the top $100$ (as many as exist, if less) activated snippets in the $\mmQA$ testset at $95-1-4$ train-validation-test split on $395$k size dataset
%     \item the spans we let the \LlamaThree evaluate on are $\pm 10$ tokens
%     \item evaluate interpretability score on $500$ explanations with selecting the one fitting span from 5 options
%     \item prompt template
%     \item details on retrieving one of top $100$ snippets
% \end{itemize}

For $2'000$ features per orthogonality penalty, we generate a natural-language description and evaluate whether it correctly explains the feature’s activation behavior. Descriptions are constrained to a maximum length of $20$ words. For each feature, we provide the evaluator with up to $20$ text spans sampled from the top $100$ most strongly activating snippets in the $\gsmk$ test set. The dataset follows a $95$-$1$-$4$ train--validation--test split over $395'000$ examples. Each span consists of a window of $\pm 10$ tokens around the activation point.

\begin{lstlisting}[caption=\textbf{System Prompt for Interpretability Score},label={lst:system_score}]
You are evaluating the interpretability of an explanation with respect 
to a set of text snippets.

You will be given:
- A list of {total_snippets} labeled text snippets ({snippet_labels})
- A single explanation describing a specific concept, feature, or pattern

Your task is to identify which snippet the explanation applies 
to MOST STRONGLY.
Exactly {correct_snippets} of the provided snippets is correct.

Guidelines:
- Select ONLY the single snippet for which the explanation is clearly 
  and directly applicable.
- You must output exactly one label.
- Do not explain your reasoning.

Output format requirements (MANDATORY):
- Output must be a Python-style list of snippet labels.
- Example valid outputs: [1], [{total_snippets}]
- Do NOT include any additional text, punctuation, or explanation.
- Do NOT include quotes around labels.
- Do NOT include reasoning or commentary.

If the explanation best fits snippet 2, your entire output must be:
[2]
\end{lstlisting}

\label{sec:expinterp}
We evaluate interpretability over $300$ feature explanations using a multiple-choice setup, where the evaluator selects the span best matching the provided description from five candidates. All evaluations are performed using \LlamaThree. We include the full system prompt in \autoref{lst:system_score}. A representative example of feature explanation is given in~\autoref{lst:aquafive}.
\begin{wrapfigure}{l}{0.5\linewidth}
    \centering
    \includegraphics[width=\linewidth]{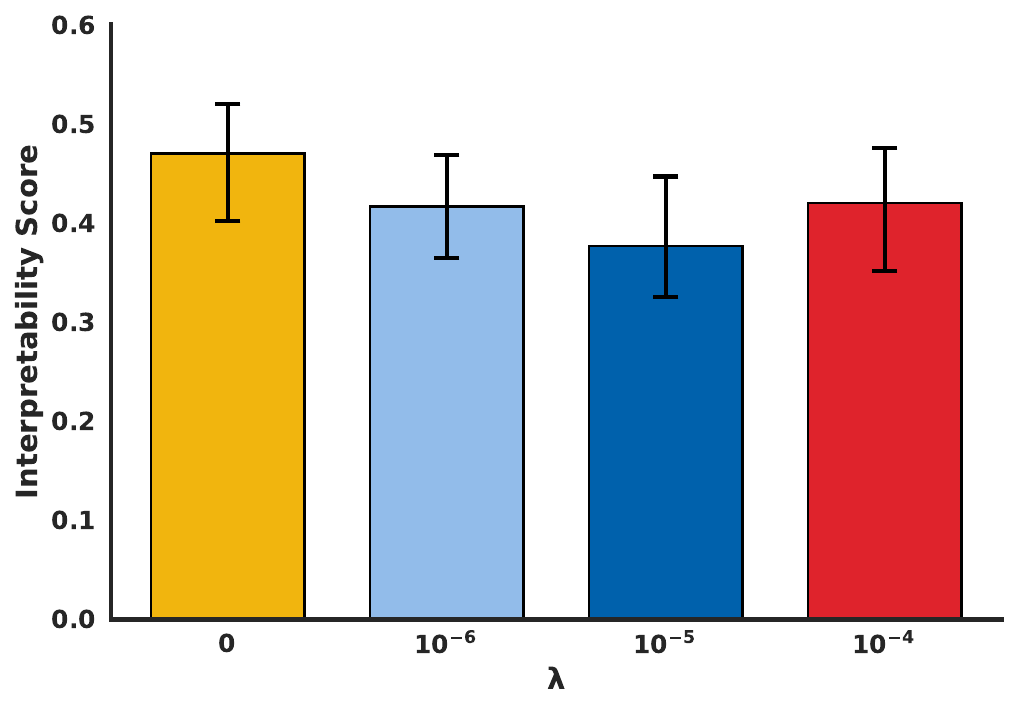}
    \caption{\textbf{Interpretability score } We plot the interpretability score of correctly identifying one out of five examples, which relates closest to the provided explanation. Error bars are basic bootstrap confidence intervals~\citep{efron1979basicbootstrap} with $100$ resamples.}
    \label{fig:interpretability}
\end{wrapfigure}

\subsection{Interpretability results}
SAEs are primarily used for mechanistic interpretability. By uncovering human-understandable concepts, we aim to track the Transformer's internal computations. We therefore check the interpretability of our SAEs under different orthogonality penalties. For this, we create LM-generated explanations for $2'000$ activated features in each of our fine-tuned SAEs. We do so providing $20$ text snippets which activate the respective feature and asking \LlamaThree to describe the common theme across snippets. For details and prompt templates, see below.

We then ask \LlamaThree to match one of five text snippets to the corresponding feature explanation. \autoref{fig:interpretability} demonstrates our results. Without allowing the model to reason, we observe interpretability performance on this task between $37\%$ and $47\%$. Given the basic bootstrap error bars~\citep{efron1979basicbootstrap}, we observe non-significant difference between orthogonality parameters. We also notice significantly better interpretability than random guessing at $20\%$. \looseness=-1

Finally, we note that any feature we report has the same feature index across all tested values of $\lambda$. Hence, LoRA adapts the LM around the SAE without repurposing feature indices. This indicates that the orthogonality regularization reshapes the geometric relationships between features while leaving the encoded concepts unchanged. Together with the stable performance in~\autoref{fig:mathevaluation} and interpretability scores in~\autoref{fig:interpretability}, this provides evidence that our pipeline does not induce unintended semantic drift.

\begin{lstlisting}[caption=\textbf{System Prompt for Feature Explanations},label={lst:system}]
You are labeling latent features of a language model trained on 
grade-school math word problems.

You will see {num_spans} numbered spans from these problems. In each 
span, exactly one token is wrapped in double square brackets, like 
[[this]]. The feature you are labeling fires on precisely the bracketed 
token; the surrounding text is context only.

Procedure:
1. Resolve each bracketed token to the full word or entity it belongs 
   to. A bracketed fragment of a longer word stands for the whole word: 
   [[umbr]] inside "umbrella" means umbrella.
2. Find the single most specific concept shared by these resolved 
   words across (nearly) all {num_spans} spans.
3. Name that concept as specifically as the evidence allows: 
   a concrete entity (a particular first name, animal, food 
   or drink, place, object); else a specific mathematical element
   (a particular unit, operation word, number format, quantity type); 
   else a structural role.

Constraints:
- Your answer must name something you can point to in the bracketed 
  tokens themselves. Never answer with a word that does not appear 
  in, or directly behind, the bracketed tokens.
- The illustration word in these instructions ("umbrella") is not 
  evidence; no word from these instructions may appear in your answer 
  unless it genuinely occurs in the spans.
- State the most specific level the spans support: the particular 
  name, animal, or unit itself, not its category 
  ("first names", "animals", "units").
- Do NOT describe features as "mathematical", "math-related", 
  "word problems", "numbers", or "reasoning".
- Do not mention the brackets, tokens, models, or 
  activations in your answer.
- Output exactly one sentence (max {max_words} words) in the 
  exact form: "The shared latent concept is ...".
- Your sentence must contain the specific shared word or name 
  itself, copied verbatim from the spans and wrapped in 
  quotation marks - category plus instance, e.g.: The shared 
  latent concept is the first name "X". / the unit "X". / 
  the animal "X".
- If the bracketed tokens resolve to different words that 
  share a category, name the category and quote two or three 
  of the actual words, e.g.: first names such as "X" and "Y".
- A category alone ("a first name", "an animal", "a unit") is 
  never an acceptable answer. If you cannot quote the specific 
  word(s), reply exactly: "No coherent concept."
\end{lstlisting}

% \begin{itemize}
%     \item we retrieve explanations for $1000$ features in each SAE
%     \item explain how we arrive at $1'000$ from $2'000$
%     \item active features are less in the more orthogonal case, hypothesis that concepts are put together 
%     \item we take cosine similarity between active features
%     \item we show that explanations for more orthogonal models exhibit lower cosine similarity
%     \item choose same models as above
%     \item we restrict ourselves to explanations that start in one of 5 ways, but these encompass $x\%$ in the explanations
%     \item details on this are in appendix
%     \item all prompts are in appendix
%     \item show one example of a feature explanation, e.g., aquarium
%     \item discuss the complexity of the dataset and the number of features present
% \end{itemize}

\subsection{Intervenability}
To evaluate the extent to which learned features support localized and controllable interventions, we conduct targeted manipulation experiments. We identify $12$ SAE features corresponding to male first names. We observe that the model assigns the same feature indices to different variations of a name. We thus list in~\autoref{lst:names} all variations of names which we accept as correct. As our intervenability study hinges on string extraction, the spelling of the concept is relevant. Across all models considered, these concepts are consistently represented by the same feature indices. Before running the experiment, we check that all names are recoverable in at least some contexts.

For each name, we intervene by swapping the corresponding feature activation at every token position where the name appears. We sweep over the following values for the associated coefficient $z_j, j \in [d]$,
\begin{align}
    \{10, 20, 50, 100, 150, 200, 300, 500\}.
\label{eqn:sweep}
\end{align}

By doing so, we find that the accuracy ranges from $0.601$ to $0.658$ for any combination of insertion value and orthogonality penalty. We then decide the hyperparameter based on the models' ability to remove the intervened on concept. For sufficiently high insertion value, the model near-perfectly drops the intervened on concept from the generation. Hence, we choose among the set of hyperparameters that includes at most $1\%$ of concepts we intervene on. This leaves us with hyperparameter values greater than or equal to $100$. For our experiments, we choose hyperparameter $200$. We show similar behavior for $z_j \in \{100, 150, 300\}$ in~\autoref{fig:intervenability_names_sweep}.

\begin{figure*}
     \centering
     \begin{subfigure}[b]{0.49\textwidth}
         \centering
         \includegraphics[width=\textwidth]{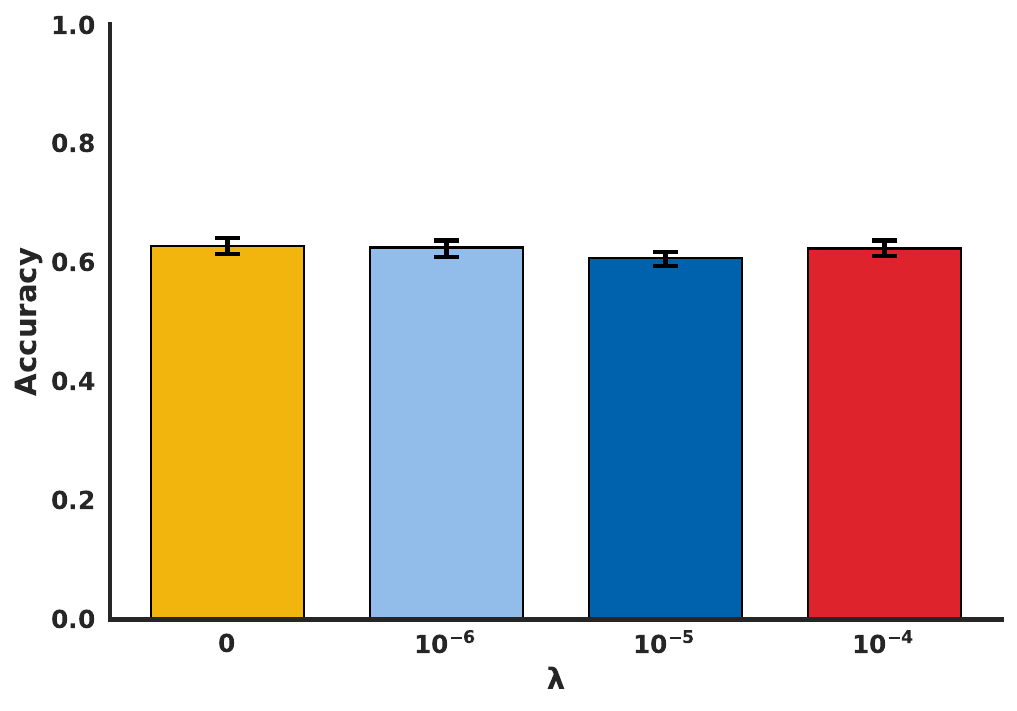}
         \caption{\textbf{Evaluation on mathematical reasoning } We plot mathematical reasoning performance for insertion hyperparameter $100$.}
         \label{fig:intervenability_eval_100}
     \end{subfigure}
     \hfill
     \begin{subfigure}[b]{0.49\textwidth}
         \centering
         \includegraphics[width=\textwidth]{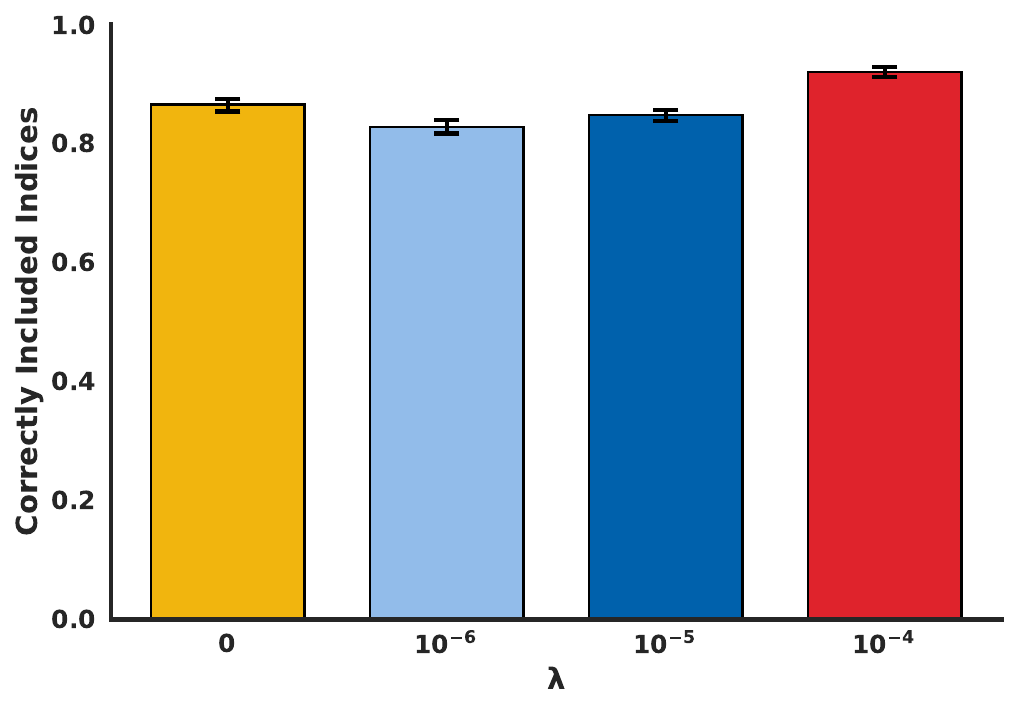}
         \caption{\textbf{Evaluation on inclusion of correct name } We plot the fraction of correctly included names for insertion hyperparameter $100$.}
         \label{fig:intervenability_include_100}
     \end{subfigure}
     \par\vspace{2em}
     \begin{subfigure}[b]{0.49\textwidth}
         \centering
         \includegraphics[width=\textwidth]{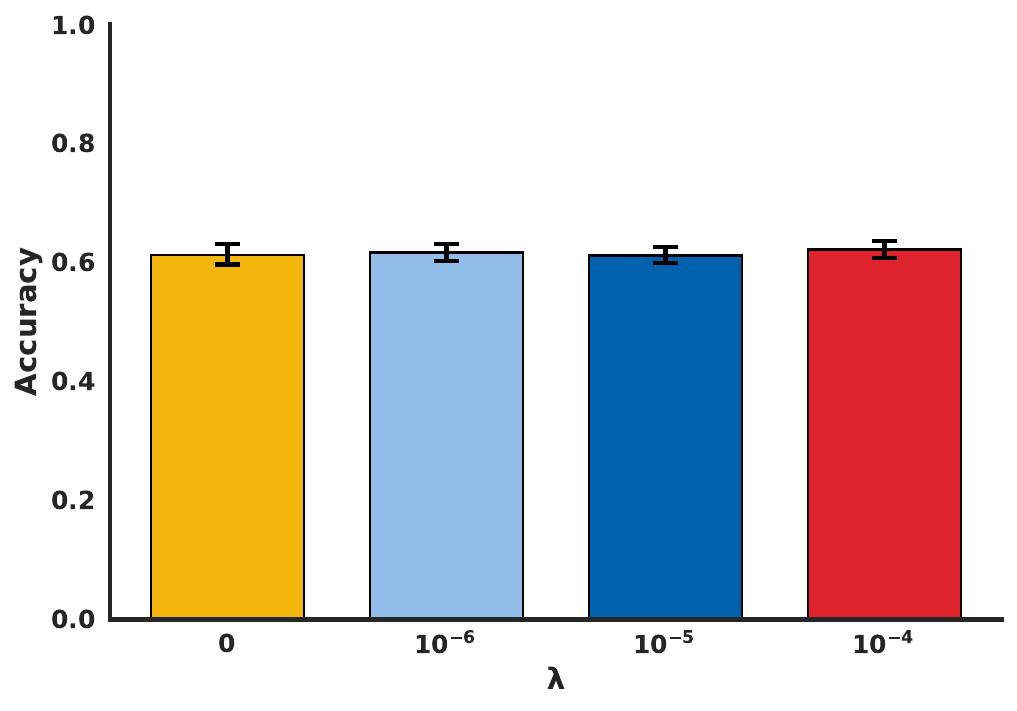}
         \caption{\textbf{Evaluation on mathematical reasoning } We plot mathematical reasoning performance for insertion hyperparameter $150$.}
         \label{fig:intervenability_eval_150}
     \end{subfigure}
     \hfill
     \begin{subfigure}[b]{0.49\textwidth}
         \centering
         \includegraphics[width=\textwidth]{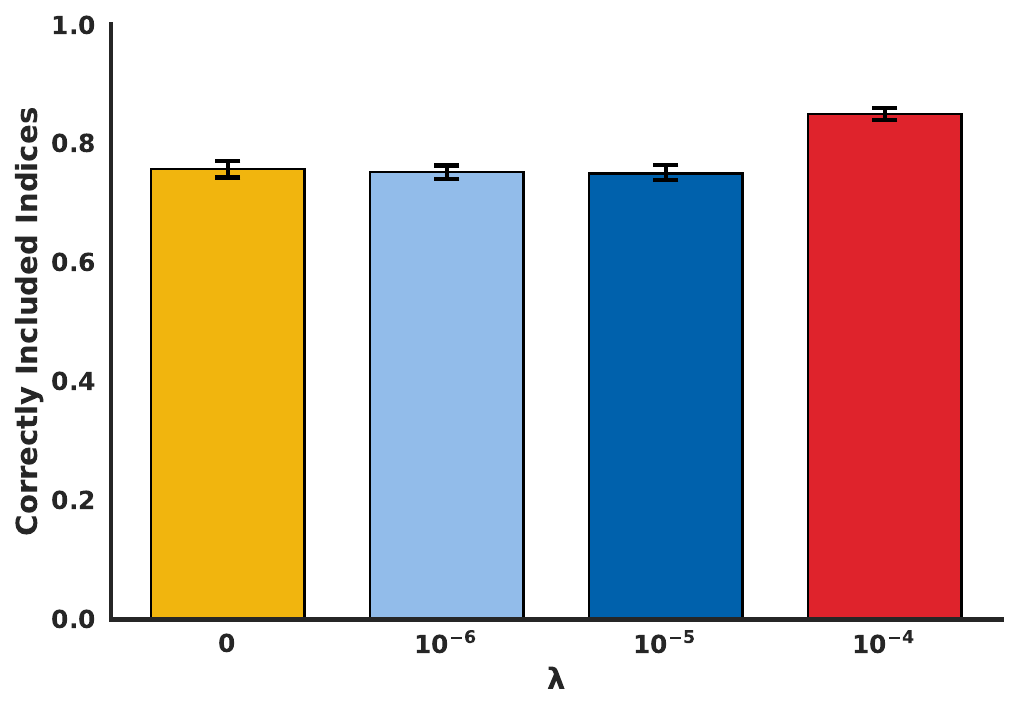}
         \caption{\textbf{Evaluation on inclusion of correct name } We plot the fraction of correctly included names for insertion hyperparameter $150$.}
         \label{fig:intervenability_include_150}
     \end{subfigure}
     \par\vspace{2em}
     \begin{subfigure}[b]{0.49\textwidth}
         \centering
         \includegraphics[width=\textwidth]{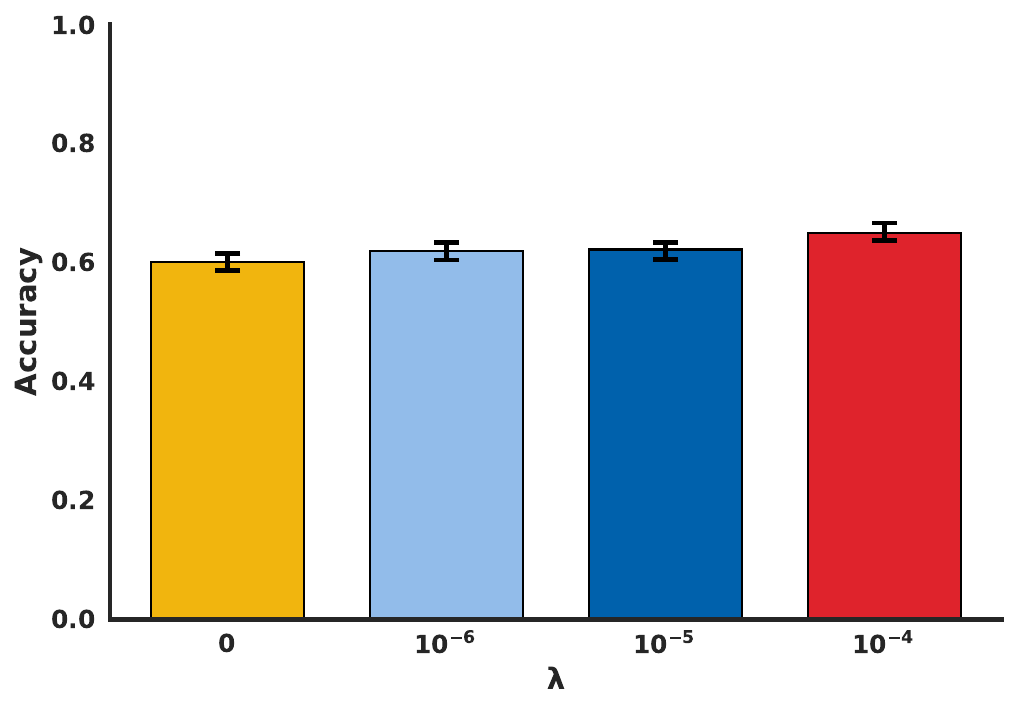}
         \caption{\textbf{Evaluation on mathematical reasoning } We plot mathematical reasoning performance for insertion hyperparameter $300$.}
         \label{fig:intervenability_eval_300}
     \end{subfigure}
     \hfill
     \begin{subfigure}[b]{0.49\textwidth}
         \centering
         \includegraphics[width=\textwidth]{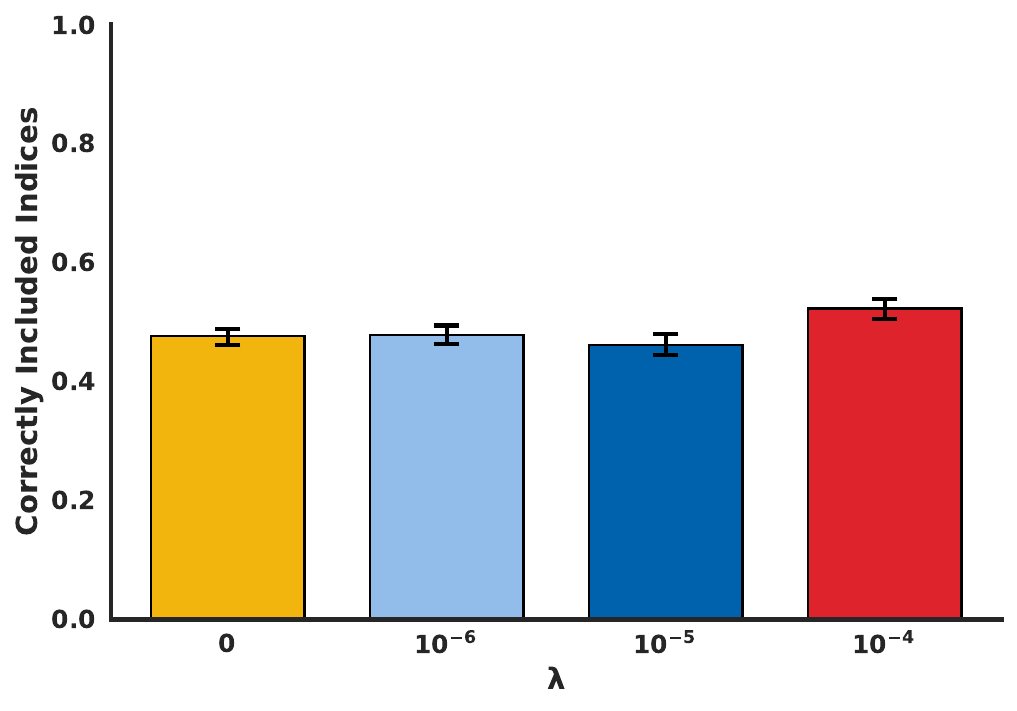}
         \caption{\textbf{Evaluation on inclusion of correct name } We plot the fraction of correctly included names for insertion hyperparameter $300$.}
         \label{fig:intervenability_include_300}
     \end{subfigure}
    \caption{\textbf{Intervenability across hyperparameter sweep } We intervene on $12$ features in the SAE corresponding to the concept of first names. Then, we evaluate the ability to maintain reasoning performance and the correct generation of the included first name.}
    \label{fig:intervenability_names_sweep}
\end{figure*}

Choosing $z_j = 200$, we evaluate both \emph{drop} and \emph{include} interventions. While all models successfully suppress the original concept under drop interventions, including a new concept proves more challenging for models with weaker orthogonality. We hypothesize that this asymmetry arises from differences in the alignment of feature directions within the embedding space. Beyond names, we also observe successful semantic interventions on other concepts; for example, activating an $\aqua$ feature causes the model to transform the character $\Mike$ into $\Aquaman$ while leaving the surrounding context unchanged. \looseness=-1

\begin{lstlisting}[caption={\textbf{First names used for interventions}},label={lst:names}]
{
"Jason":    ["Jason", "Jase"],
"Mike":     ["Michael", "Mike", "Mikey"],
"Jacob":    ["Jacob", "Jake", "Jakob"],
"Jerry":    ["Jerry", "Jeremy", "Jermey", "Jerome"],
"James":    ["James", "Jim", "Jimmy", "Jamie"],
"Robert":   ["Robert", "Rob", "Robbie"],
"Jordan":   ["Jordan", "Jordy"],
"Jackson":  ["Jackson", "Jack", "Jax"],
"Paul":     ["Paul", "Pauly"],
"David":    ["David", "Dave", "Davey"],
"Andrew":   ["Andrew", "Andy"],
"Gary":     ["Gary", "Garre", "Garret", "Garrett", "Garry"]
}
\end{lstlisting}

\subsection{Intervenability on animals and locations}
\begin{figure*}
     \centering
     \begin{subfigure}[b]{0.49\textwidth}
         \centering
         \includegraphics[width=\textwidth]{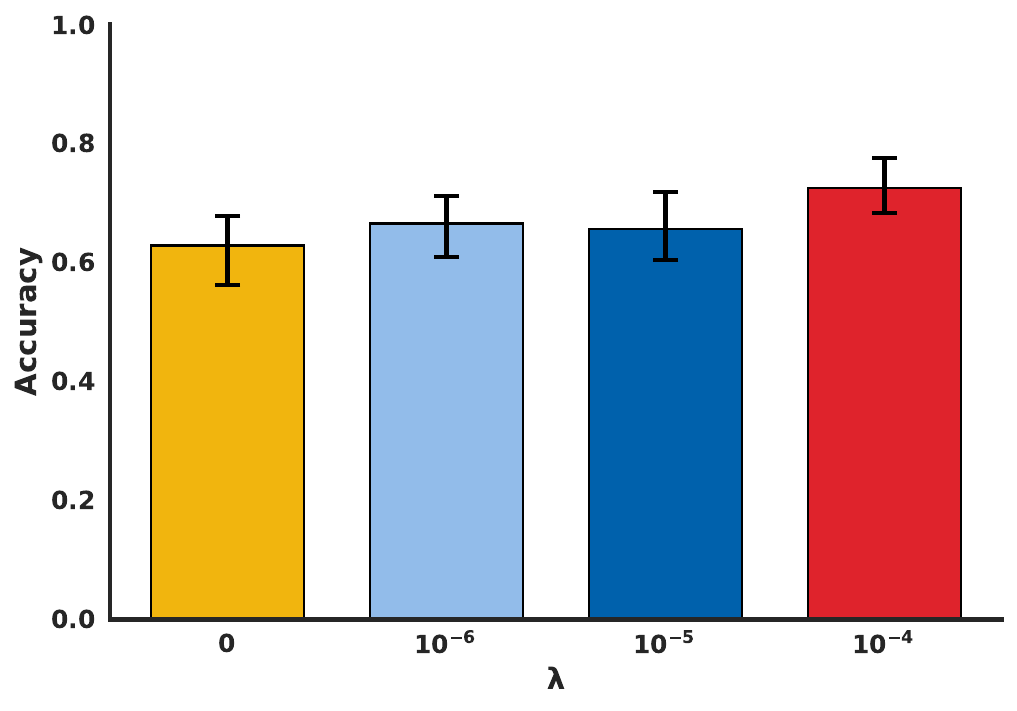}
         \caption{\textbf{Evaluation on mathematical reasoning } We plot mathematical reasoning performance on $\animal$-related queries after intervening on the SAE. Error bars are basic bootstrap confidence intervals~\citep{efron1979basicbootstrap} with $100$ samples.}
         \label{fig:intervenability_eval_animal}
     \end{subfigure}
     \hfill
     \begin{subfigure}[b]{0.49\textwidth}
         \centering
         \includegraphics[width=\textwidth]{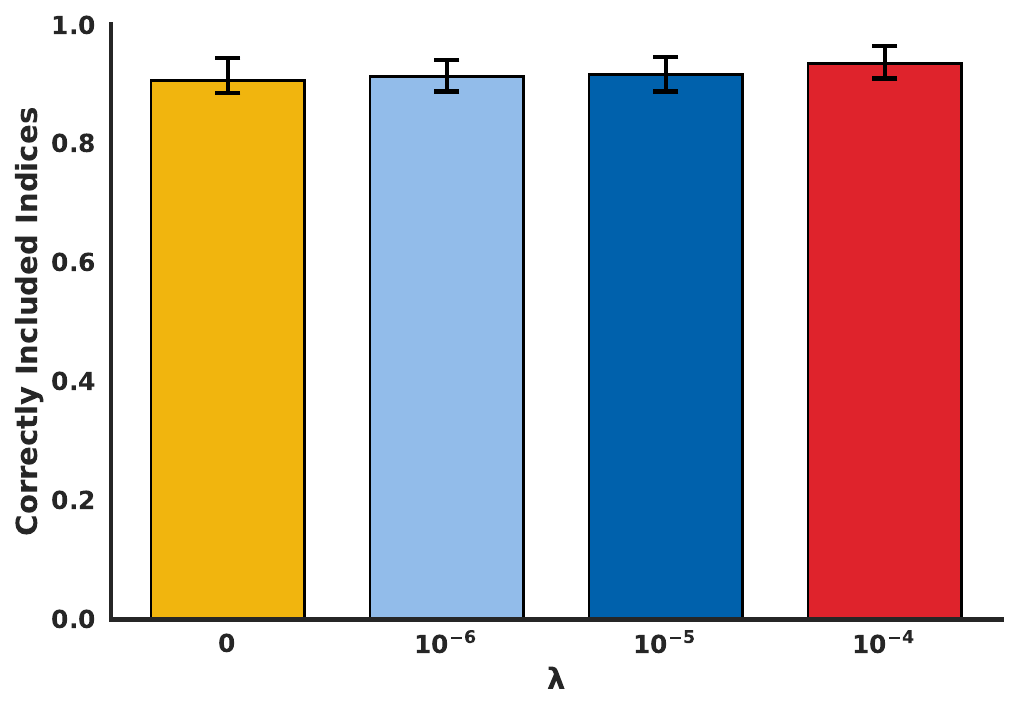}
         \caption{\textbf{Evaluation on inclusion of correct $\animal$ } We plot the fraction of correctly included $\animal$ names after intervening on the corresponding feature in the SAE. Error bars are basic bootstrap confidence intervals of $100$ random draws.}
         \label{fig:intervenability_include_animal}
     \end{subfigure}
    \caption{\textbf{Intervenability } We intervene on $5$ features in the SAE corresponding to the concept of $\animal$ names. Then, we evaluate the ability to maintain reasoning performance and the correct generation of the included $\animal$ name. We choose an insertion value of $200$. \looseness=-1}
    \label{fig:intervenability_animal}
\end{figure*}

\begin{lstlisting}[caption={\textbf{Additional concepts used for interventions}},label={lst:animlocdrnk}]
{
"animals": ["dog", "rabbit", "bear", "goat", "horse"],
"locations": ["park", "room", "school", "library", "fast-food restaurant"],
}
\end{lstlisting}
Similar to our analysis on first names, we run experiments on two additional groups. We notice now that for sentences where correct retrieval is simple for all models, the model with the strictest orthogonality penalty of $10^{-4}$ obtains higher accuracy than the non-penalized model. This can be seen for the $\animal$ dataset at insertion value $200$ in~\autoref{fig:intervenability_animal}, for instance, at an accuracy of $72.5\%$ compared to $62.8\%$. The five $\animal$ concepts are shown in~\autoref{lst:animlocdrnk}. While slightly higher at $93.4\%$ relative to $90.6\%$, correct feature inclusion between strictest penalty and no-penalty setup are comparable. Under all setups, all features corresponding to dropped concepts are adequately excluded.

The results for the $\location$ concept are found in~\autoref{tab:location}. On the one hand, we observe above-average evaluation scores of up to $95.7\%$ for $\lambda = 10^{-4}$ at modest insertion value of $10$. We suspect this to be an artifact of our dataset generation. As several sentences fall into the category of part-whole or other simple arithmetic problems, the evaluation scores tend to be higher than for the full dataset in~\autoref{fig:intervenability}. We expect these problems to be easier to solve for the LM. \looseness=-1

On the other hand, the inclusion values for the $\location$ concept are relatively small. Here, we visually inspect the ground truth reasoning traces in $\gsmk$ of the original sentences. We notice that the answer often does not restate concept in the corpus. In other words, the model is not necessarily fine-tuned on restating the concept explicitly to arrive at the final answer. As the location regularly serves as the place description in the sentence, it is only tangentially relevant to the computation of the right solution. Therefore, it is natural to omit this during generation. However, we find these results relevant because they enhance our understanding beyond the subject-level first names. We refer the reader to~\autoref{lst:intervdatasetlocation}. \looseness=-1

\subsection{Behavioral and internal measures of interference}
We extend our above analysis to the $\animal$ and $\location$ dataset. The results are reported in Tables \ref{tab:rouge-jsd-animal} and \ref{tab:rouge-jsd-locations}.

\begin{table}[h!]
\centering
\begin{tabular}{lcc cc}
\toprule
& \multicolumn{2}{c}{$\ROUGE$} & \multicolumn{2}{c}{$\JSD$} \\
\cmidrule(lr){2-3}
\cmidrule(lr){4-5}
$\lambda$ & Mean & Confidence interval & Mean & Confidence interval \\
\midrule
$0$ & 0.761 & [0.741, 0.774] & 0.259 & [0.248, 0.272] \\
$10^{-6}$ & 0.793 & [0.780, 0.804] & 0.243 & [0.233, 0.252] \\
$10^{-5}$ & 0.797 & [0.785, 0.809] & 0.229 & [0.220, 0.241] \\
$10^{-4}$ & 0.819 & [0.803, 0.835] & 0.246 & [0.234, 0.259] \\

\bottomrule
\end{tabular}
\caption{\textbf{Results for interference measures } We evaluate $\ROUGE$ and $\JSD$ on the tokens and features of the $\animal$ dataset. We report mean and the basic bootstrap confidence interval over all observations.}
\label{tab:rouge-jsd-animal}
\end{table}

\begin{table}[h!]
\centering
\begin{tabular}{lcc cc}
\toprule
& \multicolumn{2}{c}{$\ROUGE$} & \multicolumn{2}{c}{$\JSD$} \\
\cmidrule(lr){2-3}
\cmidrule(lr){4-5}
$\lambda$ & Mean & Confidence interval & Mean & Confidence interval \\
\midrule
$0$ & 0.898 & [0.879, 0.914] & 0.236 & [0.202, 0.269] \\
$10^{-6}$ & 0.894 & [0.867, 0.923] & 0.246 & [0.214, 0.276] \\
$10^{-5}$ & 0.871 & [0.843, 0.893] & 0.251 & [0.220, 0.278] \\
$10^{-4}$ & 0.894 & [0.879, 0.905] & 0.182 & [0.167, 0.198] \\

\bottomrule
\end{tabular}
\caption{\textbf{Results for interference measures } We evaluate $\ROUGE$ and $\JSD$ on the tokens and features of the $\location$ dataset. We report mean and the basic bootstrap confidence interval over all observations.}
\label{tab:rouge-jsd-locations}
\end{table}

\subsection{Dead features}
\autoref{fig:dead_fraction} demonstrates that the configuration with orthogonality penalty $\lambda = 10^{-4}$ only uses approximately two thirds of the features of the non-regularized configuration. In Section~\ref{sec:discussion}, we hypothesize that this constitutes an inherent characteristic of the strict orthogonality penalty. \looseness=-1

\begin{figure}[h]
    \centering
    \includegraphics[width=0.5\linewidth]{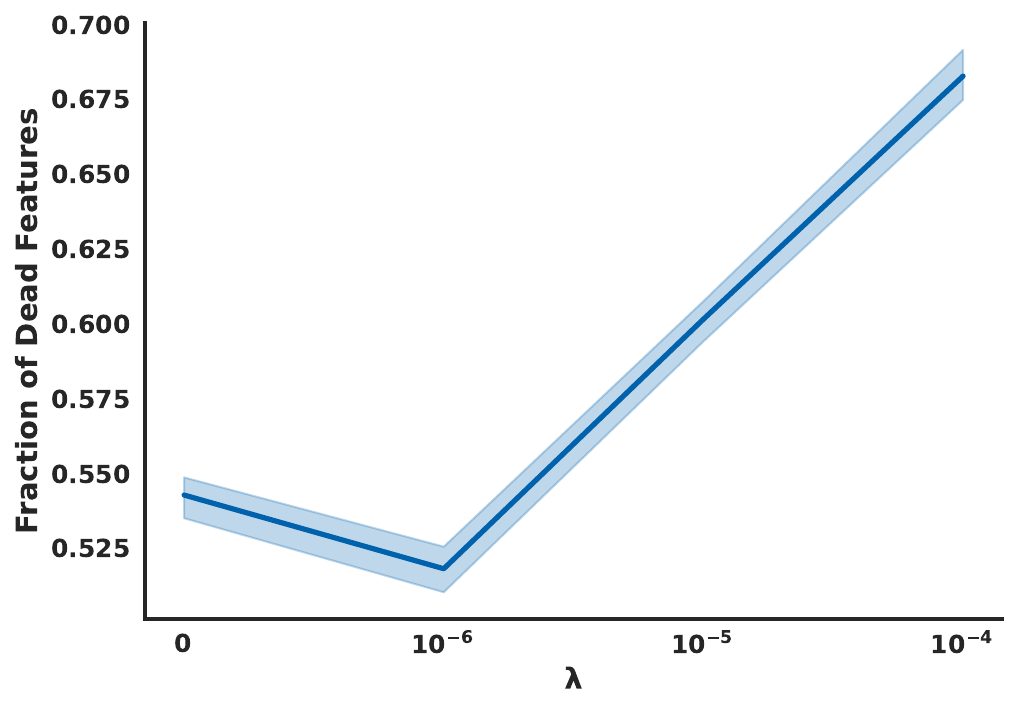}
    \caption{\textbf{Dead features } We plot the fraction of dead features with increasing orthogonality penalty. Error bars are basic bootstrap confidence intervals~\citep{efron1979basicbootstrap}.}
    \label{fig:dead_fraction}
\end{figure}

\subsection{Experiments on \Llama}
Building on the above experiments, we run the pipeline on \Llama to confirm our results. Our starting point is the FaithfulSAE~\citep{cho2025faithfulsae} optimized for $\K=48$. For easier comparability, we still run our two-stage fine-tuning pipeline on $\K=20$. Similar to~\citet{cho2025faithfulsae}, we insert the SAE after layer $12$. Architecturally, \Llama consists of $16$ layers instead of the $26$ \GemmaTwoB layers. Further, the SAE dimension is lower at $14'336$ compared to $65'536$. On top of intervening on one of the layers in the early half of the Gemma-2-2B transformer, the \Llama experiments strengthen our claim by inserting the SAE three-quarters of the way through the network. \looseness=-1

\begin{figure*}[h]
     \centering
     \begin{subfigure}[b]{0.32\textwidth}
         \centering
         \includegraphics[width=\textwidth]{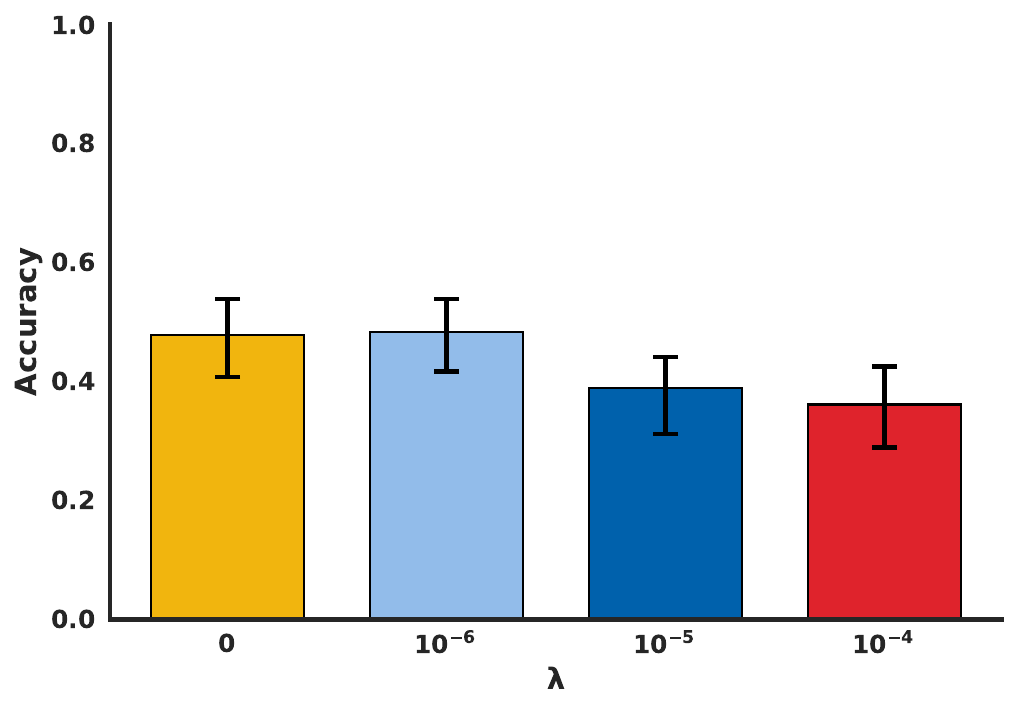}
         \caption{\textbf{Evaluation on mathematical reasoning } We plot mathematical reasoning performance after intervening on the SAE.}
         \label{fig:intervenability_eval_llama}
     \end{subfigure}
     \hfill
     \begin{subfigure}[b]{0.32\textwidth}
         \centering
         \includegraphics[width=\textwidth]{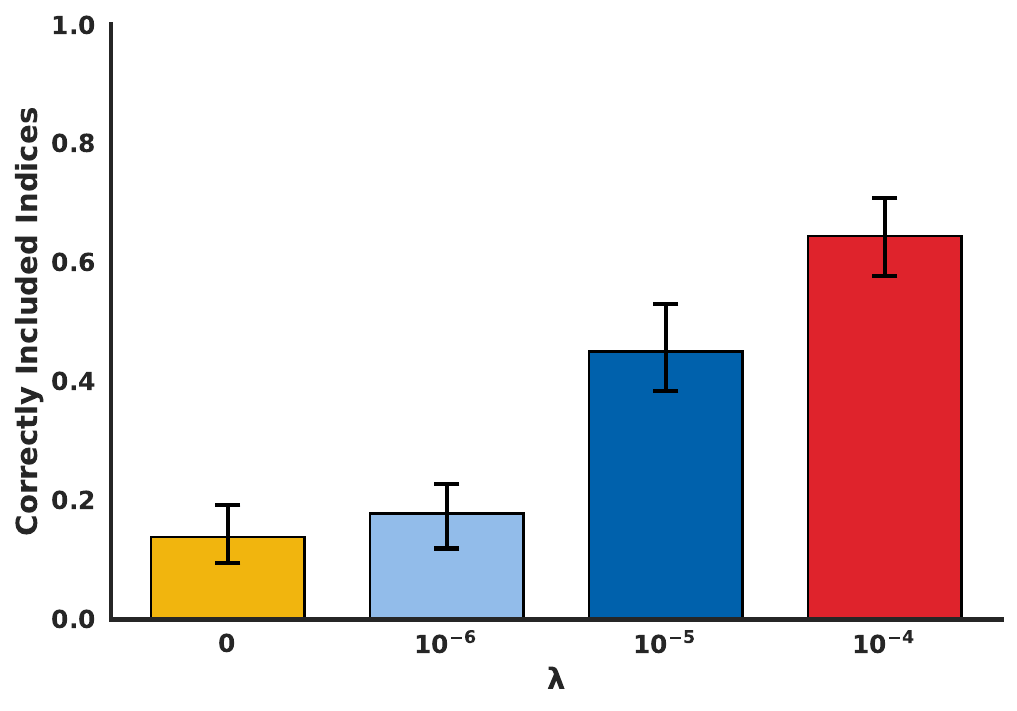}
         \caption{\textbf{Evaluation on including correct name } We plot the fraction of correctly included first names after intervention in the SAE. \looseness=-1}
         \label{fig:intervenability_include_llama}
     \end{subfigure}
     \hfill
     \begin{subfigure}[b]{0.32\textwidth}
         \centering
         \includegraphics[width=\textwidth]{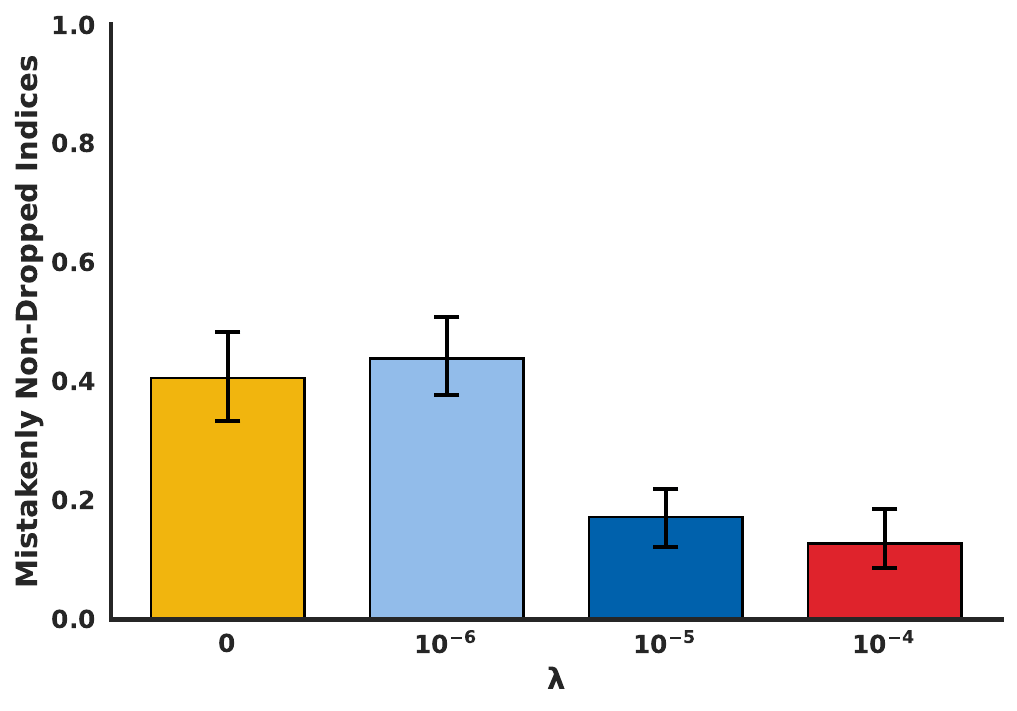}
         \caption{\textbf{Evaluation on dropping the first name } We plot the fraction of mistakenly included first names after dropping feature. \looseness=-1}
         \label{fig:intervenability_drop_llama}
     \end{subfigure}
    \caption{\textbf{Intervenability } We intervene on $180$ examples containing male first names. Then, we evaluate the ability to maintain reasoning performance and the correct generation of the included first name. We choose an insertion value of $20$. Error bars are basic bootstrap confidence intervals~\citep{efron1979basicbootstrap} with $100$ samples. \looseness=-1}
    \label{fig:intervenability_llama}
\end{figure*}

In contrast to the \GemmaTwoB experiments, we observe non-zero inclusion of the intervened on features for all orthogonality penalties and insertion values. We therefore do not compare the models conditional on properly excluding the dropped feature but choose the insertion value which yields highest accuracy. We thus also report that statistic in~\autoref{fig:intervenability_drop_llama}. Sweeping over the values for the coefficient associated with the intervention feature indices in~\eqref{eqn:sweep}, we note highest accuracy for an insertion value of $20$ for all configurations except $\lambda = 10^{-5}$, for which this is $10$. We thus choose $20$ but observe similar results for $10$. We again intervene on gendered first names and evaluate on $180$ snippets. In line with the relatively smaller model, we observe lower accuracy across all configurations. \autoref{fig:intervenability_eval_llama} indicates better performance for the no-penalty and $\lambda = 10^{-6}$ configurations compared to stricter setups. Due to the width of the basic bootstrap confidence intervals, however, we cannot claim significantly worse performance of the $\lambda = 10^{-4}$ configuration. Primarily interested in intervenability, we note in~\autoref{fig:intervenability_include_llama} that first name inclusion monotonically increases with stricter orthogonality penalty. According to the basic bootstrap intervals reported, this is also significant. Moreover,~\autoref{fig:intervenability_drop_llama} plots the fraction of included first names after removing the corresponding SAE feature from the residual stream. Note that lower inclusion of incorrect indices indicates crisper interventions. While non-zero, the two strictest configurations drop significantly more first names than the settings with $\lambda \in \{0, 10^{-6}\}$.

We understand this as additional support for our claim that orthogonality regularization improves intervenability inside the SAE. Tested on two different models at different SAE sizes and positions, we obtain qualitatively similar results. On the one hand, mathematical reasoning accuracy neither improves nor falls significantly. Interventions, on the other hand, are significantly crisper for stricter orthogonality. We show this both in terms of dropping the intervened on SAE feature as well as including the feature we manually add to the residual stream. \looseness=-1

\newpage
\subsection{Intervenability datasets}
\begin{lstlisting}[caption={\textbf{Intervenability dataset for first names}},label={lst:intervdataset}]
{
"0": {"sentence": [f"{first_name} buys twice as many red ties as blue ties. The red ties cost 50% more than blue ties. He spent $200 on blue ties that cost $40 each. How much did he spend on ties?"], "ground_truth": 800},
"1": {"sentence": [f"{first_name} sold clips to 48 of his friends in April, and then he sold half as many clips in May. How many clips did {first_name} sell altogether in April and May?"], "ground_truth": 72,},
"2": {"sentence": [f"{first_name} writes a 3-page letter to 2 different friends twice a week. How many pages does he write a year?"], "ground_truth": 624,},
"3": {"sentence": [f"{first_name} is wondering how much pizza he can eat in one day. He buys 2 large pizzas and 2 small pizzas. A large pizza has 16 slices and a small pizza has 8 slices. If he eats it all, how many pieces does he eat that day?"], "ground_truth": 48,},
"4": {"sentence": [f"{first_name} created a care package to send to his brother, who was away at boarding school. {first_name} placed a box on a scale, and then he poured into the box enough jelly beans to bring the weight to 2 pounds. Then, he added enough brownies to cause the weight to triple. Next, he added another 2 pounds of jelly beans. And finally, he added enough gummy worms to double the weight once again. What was the final weight of the box of goodies, in pounds?"], "ground_truth": 16,},
"5": {"sentence": [f"{first_name} can read 8 pages of a book in 20 minutes. How many hours will it take him to read 120 pages?"], "ground_truth": 5,},
"6": {"sentence": [f"{first_name} is fundraising for his charity by selling brownies for $3 a slice and cheesecakes for $4 a slice. If {first_name} sells 43 brownies and 23 slices of cheesecake, how much money does {first_name} raise?"], "ground_truth": 221,},
"7": {"sentence": [f"{first_name} is putting together 4 tables. Each table has 4 legs and each leg needs 2 screws. He has 40 screws. How many screws will he have left over?"], "ground_truth": 8,},
"8": {"sentence": [f"{first_name} buys 10 packs of magic cards. Each pack has 20 cards and 1/4 of those cards are uncommon. How many uncommon cards did he get?"], "ground_truth": 50,},
"9": {"sentence": [f"{first_name} creates a media empire. He creates a movie for $2000. Each DVD cost $6 to make. He sells it for 2.5 times that much. He sells 500 movies a day for 5 days a week. How much profit does he make in 20 weeks?"], "ground_truth": 448000,},
"10": {"sentence": [f"{first_name} went to a shop to buy some groceries. He bought some bread for $2, butter for $3, and juice for two times the price of the bread. He had $15 for his shopping. How much money did {first_name} have left?"], "ground_truth": 6,},
"11": {"sentence": [f"{first_name} has 2 dogs, 3 cats and twice as many fish as cats and dogs combined. How many pets does {first_name} have in total?"], "ground_truth": 15,},
"12": {"sentence": [f"{first_name} is buying a new pair of shoes that costs $95. He has been saving up his money each month for the past three months. He gets a $5 allowance a month. He also mows lawns and shovels driveways. He charges $15 to mow a lawn and $7 to shovel. After buying the shoes, he has $15 in change. If he mows 4 lawns, how many driveways did he shovel?"], "ground_truth": 5,},
"13": {"sentence": [f"{first_name} is three times older than Ben. Ben is two times older than Chris. If Chris is 4, how old is Caroline?"], "ground_truth": 24,},
"14": {"sentence": [f"{first_name} eats 1 apple a day for two weeks. Over the next three weeks, he eats the same number of apples as the total of the first two weeks. Over the next two weeks, he eats 3 apples a day. Over these 7 weeks, how many apples does he average a week?"], "ground_truth": 10,},
"15": {"sentence": [f"{first_name} bought 2 soft drinks for$ 4 each and 5 candy bars. He spent a total of 28 dollars. How much did each candy bar cost?"], "ground_truth": 4,},
"16": {"sentence": [f"{first_name} has a stack of books that is 12 inches thick. He knows from experience that 80 pages is one inch thick. If he has 6 books, how many pages is each one on average?"], "ground_truth": 160,},
"17": {"sentence": [f"{first_name} is throwing a huge Christmas party. He invites 30 people. Everyone attends the party, and half of the guests bring a plus one (one other person). He plans to serve a 3-course meal for the guests. If he uses a new plate for every course, how many plates does he need in total for his guests?"], "ground_truth": 135,},
"18": {"sentence": [f"{first_name} volunteers at a shelter twice a month for 3 hours at a time. How many hours does he volunteer per year?"], "ground_truth": 72,},
"19": {"sentence": [f"{first_name} puts $25 in his piggy bank every month for 2 years to save up for a vacation. He had to spend $400 from his piggy bank savings last week to repair his car. How many dollars are left in his piggy bank?"], "ground_truth": 200,},
"20": {"sentence": [f"{first_name} has 16 toy cars, and the number of cars he has increases by 50% every year. How many toy cars will {first_name} have in three years?"], "ground_truth": 54,},
"21": {"sentence": [f"{first_name} just turned 12 and started playing the piano. His friend Sheila told him about the 10,000-hour rule which says, after 10,000 hours of practice, you become an expert or master in your field. If {first_name} wants to become a piano expert before he is 20, how many hours a day will he need to practice if he practices every day, Monday - Friday, and takes two weeks off for vacation each year?"], "ground_truth": 5,},
"22": {"sentence": [f"In 6 months Bella and {first_name} will be celebrating their 4th anniversary. How many months ago did they celebrate their 2nd anniversary?"], "ground_truth": 18,},
"23": {"sentence": [f"{first_name} was preparing for a party to be held in four days. So, he made 24 gallons of root beer on the first day and put them in the refrigerator cooler. But later that evening, his children discovered the delicious nectar and robbed the cooler, drinking 4 of those gallons of root beer. On the second day, his wife Barbie also discovered the root beer and accidentally spilled 7 gallons. On the third day, {first_name}'s friend Ronnie visited {first_name}'s house and helped himself to the root beer, further reducing the amount remaining by 5 gallons. On the fourth day, 3 people showed up for the party. If {first_name} and the others shared the remaining root beer equally, how much was available for each to drink during the party?"], "ground_truth": 350,},
"24": {"sentence": [f"{first_name} owns an ice cream shop and every sixth customer gets a free ice cream cone. Cones cost $2 each. If he sold $100 worth of cones, how many free ones did he give away?"], "ground_truth": 10,},
"25": {"sentence": [f"{first_name} likes to collect model trains. He asks for one for every birthday of his, and asks for two each Christmas. {first_name} always gets the gifts he asks for, and asks for these same gifts every year for 5 years. At the end of the 5 years, his parents give him double the number of trains he already has. How many trains does {first_name} have now?"], "ground_truth": 45,},
"26": {"sentence": [f"{first_name} has $5000. He spends $2800 on a new motorcycle, and then spends half of what's left on a concert ticket. {first_name} then loses a fourth of what he has left. How much money does he have left?"], "ground_truth": 825,},
"27": {"sentence": [f"{first_name} has some coins. He has 2 more quarters than nickels and 4 more dimes than quarters. If he has 6 nickels, how much money does he have?"], "ground_truth": 350,},
"28": {"sentence": [f"{first_name} is at a train station and is waiting for his train. He isn't sure how long he needs to wait, but he knows that the fourth train scheduled to arrive at the station is the one he needs to get on. The first train is scheduled to arrive in 10 minutes, and this train will stay in the station for 20 minutes. The second train is to arrive half an hour after the first train leaves the station, and this second train will stay in the station for a quarter of the amount of time that the first train stayed in the station. The third train is to arrive an hour after the second train leaves the station, and this third train is to leave the station immediately after it arrives. The fourth train will arrive 20 minutes after the third train leaves, and this is the train {first_name} will board. In total, how long, in minutes, will {first_name} wait for his train?"], "ground_truth": 27,},
"29": {"sentence": [f"{first_name} decides to take up juggling to perform at the school talent show a month in the future. He starts off practicing juggling 3 balls, and slowly gets better adding 1 ball to his juggling act each week. After the end of the fourth week the talent show begins, but when {first_name} walks on stage he slips and drops three of his balls. 2 of them are caught by people in the crowd as they roll off the stage, but one gets lost completely since the auditorium is dark. With a sigh, {first_name} starts to juggle on stage with how many balls?"], "ground_truth": 4,},

}
\end{lstlisting}

\begin{lstlisting}[caption={\textbf{Intervenability dataset for animals}},label={lst:intervdatasetanimals}]
"0": {"sentence": [f"Out of the 60 {animal}s in a kennel, 9 {animal}s enjoy watermelon, 48 {animal}s enjoy salmon, and 5 {animal}s enjoy both watermelon and salmon. How many {animal}s in the kennel do not eat either watermelon or salmon?"], "ground_truth": 8},
"1": {"sentence": [f"James needs to get more toys for his {animal} shelter. Each {animal} needs one toy. James currently has 4 toys on hand for 4 {animal}s, but there are 8 more {animal}s in the shelter now. After buying the toys, he went back to see that there are twice as many more {animal}s than when he left so he had to buy some more toys. When James came back yet again, 3 {animal}s were gone so he no longer needed those toys. How many toys in total does James need?"], "ground_truth": 33},
"2": {"sentence": [f"If Elise purchased a 15kg bag of {animal} food and then another 10kg bag, resulting in a total of 40kg of {animal} food, how many kilograms of {animal} food did Elise already have before her recent purchases?"], "ground_truth": 15},
"3": {"sentence": [f"John takes care of 10 {animal}s. Each {animal} takes .5 hours a day to walk and take care of their business. How many hours a week does he spend taking care of {animal}s?"], "ground_truth": 35},
"4": {"sentence": [f"Cecilia just bought a new {animal}. According to her veterinarian, she has to feed the {animal} 1 cup of {animal} food every day for the first 180 days. Then she has to feed the {animal} 2 cups of {animal} food every day for the rest of its life. If one bag of {animal} food contains 110 cups, how many bags of {animal} food will Cecilia use in the first year?"], "ground_truth": 5},
"5": {"sentence": [f"What is the total weight, in ounces, of the pet food that Mrs. Anderson bought if she purchased 2 bags of 3-pound cat food and 2 bags of {animal} food that weigh 2 pounds more than each bag of cat food?"], "ground_truth": 256},
"6": {"sentence": [f"If Belle consumes 4 {animal} biscuits and 2 rawhide bones every evening, and each rawhide bone costs $1 and each {animal} biscuit costs $0.25, what is the total cost, in dollars, to feed Belle these treats for a week?"], "ground_truth": 21},
"7": {"sentence": [f"If Ben's potato gun can launch a potato 6 football fields, and each football field is 200 yards long, and Ben's {animal} can run at a speed of 400 feet per minute, how many minutes will it take for his {animal} to fetch a potato that he launches?"], "ground_truth": 9},
"8": {"sentence": [f"John adopts a {animal}. He takes the {animal} to the groomer, which costs $100. The groomer offers him a 30% discount for being a new customer. How much does the grooming cost?"], "ground_truth": 70},
"9": {"sentence": [f"It takes 2.5 hours to groom a {animal} and 0.5 hours to groom a cat. What is the number of minutes it takes to groom 5 {animal}s and 3 cats?"], "ground_truth": 840},
"10": {"sentence": [f"A cat has nine lives.  A {animal} has x less lives than a cat.  A mouse has 7 more lives than a {animal}. A mouse has 13 lives. What is the value of unknown variable x?"], "ground_truth": 3},
"11": {"sentence": [f"If seven more {animal}s are added to the thirteen in the cage, the number of {animal}s in the cage will be 1/3 the number of {animal}s Jasper saw in the park today. How many {animal}s did Jasper see in the park today?"], "ground_truth": 60},
"12": {"sentence": [f'Chris has a two-speed lawn mower. He can mow his entire lawn in "turtle" mode in 1 hour, or 40 minutes in "{animal}" mode. Today, he experimented by mowing half in turtle mode and half in {animal} mode. How many minutes did it take him to mow the lawn?'], "ground_truth": 50},
"13": {"sentence": [f"At a James Bond movie party, each guest is either male (M) or female (F. 40% of the guests are women, 80% of the women are wearing {animal} ears, and 60% of the males are wearing {animal} ears. If the total number of guests at the party is 200, given all this information, what is the total number of people wearing {animal} ears?"], "ground_truth": 136},
"14": {"sentence": [f"Every month, Madeline has to buy food, treats, and medicine for her {animal}. Food costs $25 per week. Treats cost $20 per month. Medicine costs $100 per month. How much money does Madeline spend on her {animal} per year if there are 4 weeks in a month?"], "ground_truth": 2640},
"15": {"sentence": [f"Russell works at a pet store and is distributing straw among the rodents. The rats are kept in 3 cages in equal groups and each rat is given 6 pieces of straw. There are 10 cages of hamsters that are kept alone and each hamster is given 5 pieces of straw. There is also a pen of {animal} where 20 pieces of straw are distributed among the {animal}. No straw is used anywhere else in the store. If 160 pieces of straw have been distributed among the small rodents, how many rats are in each cage?"], "ground_truth": 5},
\end{lstlisting}

\begin{lstlisting}[caption={\textbf{Intervenability dataset for locations}},label={lst:intervdatasetlocation}]
{
"0": {"sentence": [f"If the total cost of all the {location} supplies that Pauline wants to purchase is $150 before sales tax, and the sales tax is 8% of the total amount purchased, what will be the total amount that Pauline will spend on all the items, including sales tax?"], "ground_truth": 162},
"1": {"sentence": [f"If adding seven more rabbits to the thirteen already in the cage would result in a total number of rabbits that is 1/3 of the number of rabbits Jasper saw in the {location} today, what is the total number of rabbits that Jasper saw in the {location} today?"], "ground_truth": 60},
"2": {"sentence": [f"If there were initially 250 books in the {location} and 120 books were taken out on Tuesday to be read by children, followed by 35 books being returned on Wednesday and another 15 books being withdrawn on Thursday, what is the current total number of books in the {location}?"], "ground_truth": 150},
"3": {"sentence": [f"Jason goes to the {location} 4 times more often than William goes. If William goes 2 times per week to the {location}, how many times does Jason go to the {location} in 4 weeks?"], "ground_truth": 32},
"4": {"sentence": [f"If there were initially 235 books in the {location} and 227 books were borrowed on Tuesday, then 56 books were returned on Thursday, and 35 books were borrowed again on Friday, how many books are currently in the {location}?"], "ground_truth": 29},
"5": {"sentence": [f"A {location} has a number of books. 35% of them are intended for children and 104 of them are for adults. How many books are in the {location}?"], "ground_truth": 160},
"6": {"sentence": [f"If Benjamin spent a total of $15 at {location}, buying a salad, a burger, and two packs of fries, and one pack of fries cost $2, with the salad being three times that price, what is the cost of the burger?"], "ground_truth": 5},
}
\end{lstlisting}

\newpage
\subsection{Sweep results}
\begin{table}[h!]
\centering
\caption{Sweep results for different insertion values for intervening on the \texttt{animals} concept}
\begin{tabular}{lccccc}
\hline
metric & insertion value & acc $(10^{-4})$ & acc $(10^{-5})$ & acc $(10^{-6})$ & acc $(0)$ \\
\hline
eval & 10 & 0.784 & 0.628 & 0.625 & 0.600 \\
eval & 20 & 0.769 & 0.662 & 0.644 & 0.619 \\
eval & 50 & 0.716 & 0.650 & 0.637 & 0.619 \\
eval & 100 & 0.709 & 0.669 & 0.659 & 0.631 \\
eval & 150 & 0.716 & 0.662 & 0.656 & 0.647 \\
eval & 200 & 0.725 & 0.656 & 0.666 & 0.628 \\
eval & 300 & 0.731 & 0.656 & 0.650 & 0.637 \\
eval & 500 & 0.662 & 0.578 & 0.647 & 0.597 \\
& & & & & \\
include & 10 & 0.487 & 0.572 & 0.656 & 0.653 \\
include & 20 & 0.734 & 0.744 & 0.772 & 0.781 \\
include & 50 & 0.912 & 0.925 & 0.934 & 0.922 \\
include & 100 & 0.938 & 0.928 & 0.931 & 0.916 \\
include & 150 & 0.934 & 0.925 & 0.925 & 0.916 \\
include & 200 & 0.934 & 0.916 & 0.912 & 0.906 \\
include & 300 & 0.912 & 0.831 & 0.816 & 0.797 \\
include & 500 & 0.678 & 0.547 & 0.531 & 0.544 \\
& & & & & \\
drop & 10 & 0.344 & 0.241 & 0.153 & 0.147 \\
drop & 20 & 0.138 & 0.113 & 0.084 & 0.078 \\
drop & 50 & 0.028 & 0.009 & 0.000 & 0.003 \\
drop & 100 & 0.000 & 0.000 & 0.000 & 0.000 \\
drop & 150 & 0.000 & 0.000 & 0.000 & 0.000 \\
drop & 200 & 0.000 & 0.000 & 0.000 & 0.000 \\
drop & 300 & 0.000 & 0.000 & 0.000 & 0.000 \\
drop & 500 & 0.003 & 0.000 & 0.009 & 0.006 \\
\hline
\label{tab:animals}
\end{tabular}
\end{table}

\begin{table}[h!]
\centering
\caption{Sweep results for different insertion values for intervening on the \texttt{locations} concept}
\begin{tabular}{lccccc}
\hline
metric & insertion value & acc $(10^{-4})$ & acc $(10^{-5})$ & acc $(10^{-6})$ & acc $(0)$ \\
\hline
eval & 10 & 0.957 & 0.879 & 0.893 & 0.929 \\
eval & 20 & 0.957 & 0.864 & 0.914 & 0.929 \\
eval & 50 & 0.957 & 0.871 & 0.921 & 0.914 \\
eval & 100 & 0.957 & 0.871 & 0.907 & 0.907 \\
eval & 150 & 0.929 & 0.864 & 0.907 & 0.921 \\
eval & 200 & 0.914 & 0.886 & 0.900 & 0.914 \\
eval & 300 & 0.893 & 0.843 & 0.907 & 0.871 \\
eval & 500 & 0.814 & 0.779 & 0.893 & 0.821 \\
& & & & & \\
include & 10 & 0.229 & 0.214 & 0.321 & 0.421 \\
include & 20 & 0.400 & 0.429 & 0.464 & 0.429 \\
include & 50 & 0.521 & 0.586 & 0.607 & 0.579 \\
include & 100 & 0.557 & 0.586 & 0.600 & 0.586 \\
include & 150 & 0.557 & 0.586 & 0.600 & 0.593 \\
include & 200 & 0.564 & 0.593 & 0.593 & 0.600 \\
include & 300 & 0.571 & 0.600 & 0.586 & 0.593 \\
include & 500 & 0.564 & 0.593 & 0.586 & 0.657 \\
& & & & & \\
drop & 10 & 0.379 & 0.400 & 0.393 & 0.243 \\
drop & 20 & 0.243 & 0.236 & 0.221 & 0.186 \\
drop & 50 & 0.107 & 0.057 & 0.021 & 0.000 \\
drop & 100 & 0.029 & 0.000 & 0.000 & 0.000 \\
drop & 150 & 0.007 & 0.000 & 0.000 & 0.000 \\
drop & 200 & 0.000 & 0.000 & 0.000 & 0.000 \\
drop & 300 & 0.000 & 0.000 & 0.000 & 0.000 \\
drop & 500 & 0.014 & 0.007 & 0.000 & 0.000 \\
\hline
\label{tab:location}
\end{tabular}
\end{table}

\end{document}